\documentclass{article}

\usepackage[preprint]{neurips_2026}

\usepackage[utf8]{inputenc}
\usepackage[T1]{fontenc}
\usepackage{hyperref}
\usepackage{url}
\usepackage{booktabs}
\usepackage{amsfonts}
\usepackage{amsmath}
\usepackage{amssymb}
\usepackage{amsthm}
\usepackage{subcaption}
\usepackage{bm}
\usepackage{nicefrac}
\usepackage{microtype}
\usepackage{xcolor}
\usepackage{graphicx}
\usepackage{enumitem}
\usepackage{mathtools}
\usepackage{xcolor}

\newtheorem{proposition}{Proposition}

\title{Take It or Leave It: Intent-Controlled Partial Optimal Transport}

\author{%
  Salil Parth Tripathi\\
  OceanDataLab\\
  \texttt{parth.tripathi@oceandatalab.com}
  \And
  Bertrand Chapron\\
  Ifremer\\
  \texttt{bertrand.chapron@ifremer.fr}
  \And
  Fabrice Collard\\
  OceanDataLab\\
  \texttt{dr.fab@oceandatalab.com}
  \And
  Nicolas Courty\\
  Universit\'e Bretagne Sud\\
  \texttt{nicolas.courty@univ-ubs.fr}
  \And
  Ronan Fablet\\
  IMT Atlantique\\
  \texttt{ronan.fablet@imt-atlantique.fr}
}

\begin{document}

\maketitle

\begin{abstract}
While optimal transport (OT) enforces a rigid constraint by requiring two measures to be matched exactly, partial optimal transport relaxes this requirement by allowing mass to remain unmatched through a global budget, scalar rebate, or uniform rejection rule. However, many applications call for more structured, pointwise rejection mechanisms, where the decision to leave mass unmatched depends on side-specific reliability, support geometry, or external information about which components should participate in the comparison. We introduce \emph{intent-controlled partial optimal transport} (IC-POT), a targeted generalization of partial transport that replaces the global rejection paradigm with pointwise rejection costs over both measures. We show that the resulting optimization problem admits a dual interpretation in terms of local acceptance thresholds and can be solved by recasting it as a balanced Kantorovich OT problem on an augmented support. Beyond theoretical analysis, we demonstrate the practical relevance of IC-POT in settings where rejection is driven by side information. In positive-unlabeled learning and open-partial domain adaptation, incorporating pointwise rejection rules that encode statistical structure improves fixed baseline pipelines. Finally, we motivate the use of IC-POT with a geophysical practical case: multi-modal satellite ocean measurements, for which physical and sensors priors naturally inform the rejection mechanism and define the retrieved comparable signal information.

\end{abstract}

\section{Introduction}
Optimal transport (OT) provides a principled way to compare and align distributions under a transport cost, and has become a standard tool in machine learning, computer vision, and scientific computing \citep{peyre2019computational}. In its classical balanced form, all source mass must be transported and all target mass must be explained. This full-participation assumption is often too rigid and may result in artificial correspondences, negative transfer, or spurious matches between structures that should play different roles in the comparison.

Partial and unbalanced OT introduce non-participation by allowing some mass to remain unmatched \citep{caffarelli2010free,chapel2020partial,liero2018optimal,chizat2018scaling}. In most formulations, however, this choice is governed by a global transported-mass budget, a scalar rebate, or a marginal penalty. These mechanisms only control how much mass is rejected, not which points should be protected, exposed, or made inexpensive to discard. Many applications require a formulation in which leaving mass unmatched can be priced locally, according to the information available on each support. In positive-unlabeled learning, rejection is tied to the sampling mechanism \citep{chapel2020partial}. In open-partial domain adaptation, it is tied to confidence and target geometry \citep{courty2017optimal,chang2022uniot,luo2023mot}. In geophysical comparison, the question is often not whether two observations differ, but which parts of them can be meaningfully compared given their measurement geometry and sensor-specific uncertainties \citep{hauser2021swim,hasselmann1985sar,stopa2015azimuthcutoff}. 

To address these challenges, we introduce \emph{intent-controlled partial optimal transport} (IC-POT), a targeted extension of partial OT formulated with explicit source-side and target-side slack variables. Pointwise costs attached to these slacks price the local alternative of leaving mass unmatched, while the pairwise transport cost continues to describe the geometry of matching. The modeling shift is simple: the unmatched policy becomes representable directly on the original supports. We show that this formulation 
%The formulation also has a direct mathematical position relative to classical partial transport. Eliminating the slack variables 
yields an equivalent Lagrangian sub-coupling problem in the sense of Caffarelli--McCann \citep{caffarelli2010free}, with the scalar transport rebate replaced by a separable local rebate, one term on each marginal. This reduced form preserves the side-specific interpretation of the unmatched costs and leads to dual acceptance caps, an exact admissibility rule for active transport edges, and a strict extension of the constant-cost partial OT model.
Experiments on positive-unlabeled learning, open-partial domain adaptation, and multimodal satellite-derived ocean-wave spectra show how IC-POT leverages side information to specify the unmatched policy. PU learning and OPDA test statistical settings where sampling bias, confidence, and latent geometry determine which samples should remain protected from rejection. The ocean-wave experiment compares spectra retrieved from SWIM, the wave scatterometer onboard CFOSAT \citep{hauser2021swim}, and Synthetic Aperture Radar (SAR) \citep{hasselmann1985sar}, and illustrates a distinct role of the formulation: physical priors naturally inform the rejection mechanism and define the comparable signal itself. Across these experiments, the transport geometry and baseline pipeline are kept fixed; what changes is the way unmatched mass is priced.

Overall, the contribution of the paper is threefold. First, it makes unmatched behavior representable as an explicit object in the formulation through a slack formulation on the original support. Second, it characterizes the resulting problem through an equivalent separable pointwise-rebate generalization of the classical Lagrangian partial transport, with Kantorovich equivalence, dual caps, sparse admissibility, and strict separation from constant-cost partial OT. Third, it shows empirically that this added modeling capacity matters in reject-structured tasks compared with constant-cost partial OT. %including fixed-backbone OPDA where at least one IC-POT variant improves over the constant-cost baseline on 24 of the 25 transfers reported in the paper.

\section{Background and Related Work}
\label{sec:background}

Optimal transport compares measures under a pairwise transport cost \citep{peyre2019computational,villani2009optimal}. In the discrete balanced Kantorovich setting, we consider finite supports $X=\{x_i\}_{i=1}^n$, $Y=\{y_j\}_{j=1}^m$, masses $\mu\in\mathbb{R}_+^n$, $\nu\in\mathbb{R}_+^m$, and a transport cost matrix $C\in\mathbb{R}_+^{n\times m}$. The classical problem searches for a coupling $P\in\mathbb{R}_+^{n\times m}$ solving
\begin{equation}
\label{eq:balanced-kantorovich}
\min_{P\ge 0} \ \langle C,P\rangle
\qquad \text{s.t.} \qquad
P\mathbf{1}_m=\mu,
\quad
P^\top \mathbf{1}_n=\nu.
\end{equation}
This full-matching requirement is often too rigid when part of one support should remain unmatched. Classical partial OT addresses this issue by optimizing over sub-couplings. In the Lagrangian formulation of Caffarelli--McCann, the amount of transported mass is governed by a scalar rebate: transporting one more unit is rewarded uniformly, independently of where it comes from or where it goes \citep{caffarelli2010free}. The constant-cost construction of \citet{chapel2020partial} is a discrete version of the same global-control regime. Unbalanced OT relaxes the marginal constraints through global penalties \citep{liero2018optimal,chizat2018scaling}, while semi-relaxed OT allows one-sided marginal relaxation natively \citep{fukunaga2022semiot}. Recent contributions such as mini-batch partial transportation \citep{nguyen2022mpot} and progressive partial OT \citep{zhang2024p2ot} mainly improve optimization or scheduling while keeping the same type of global reject control.

There is also a broader line of work on constrained and sparse OT frameworks. Capacity and other feasibility constraints modify the admissible set of couplings directly \citep{korman2013capacityot,ekren2018constrainedot}, while smooth, sparse, and sparsity-constrained OT aim to structure the transport plan itself through regularization or explicit support restrictions \citep{blondel2018smoothsparse,liu2022sparsityconstrainedot}. %These formulations are important neighbors, but 
Through related to our work, these formulations operate on a different object: they shape feasible couplings or the geometry of the domain, whereas we aim to make unmatched behavior itself dependent on pointwise structure and side information while keeping a partial OT framework.

Several application areas already require control over which samples participate in transport. Guided color or style transfer uses masks, semantic constraints, or user guidance to preserve selected regions \citep{frigo2014colorguided,li2018faceot,kolkin2019strotss}. In domain adaptation, OT has been used to align domains and infer transferable structure from the transport plan itself \citep{courty2017optimal,chang2022uniot}, while masked OT has been proposed in partial domain adaptation because standard OT assumptions can induce biased alignments and negative transfer \citep{luo2023mot}. In selective matching and support subset selection, fully relaxed or globally penalized formulations may suppress points that should remain active in the matching process \citep{riaz2023supportsubset}.

OT has also become a useful tool to account for space-time variabilities in geosience.
%in physical-data problems where displacement matters. 
Among others, Wasserstein distances have been studied for variational geophysical data assimilation to better account for observation-state relationships  \citep{feyeux2018optimal,bocquet2024bridging}; semidiscrete OT has been used for cosmological reconstruction of the early Universe \citep{levy2021fast}; and Lagrangian trajectory assimilation with drifters illustrates the same broader geophysical needs to compare evolving, displaced observations \citep{nilsson2012variational}. The geophysical experiment considered in our work, namely the analysis of satellite-derived ocean wave spectra, also relates to these requirements but targets the unmatched policy itself, that is to say, beyond comparing displaced masses, which parts of two co-located spectra should participate in the comparison as illustrated in Figure~\ref{fig:geophys-real-controlled}.

These works show that OT can already be adapted to selective settings, but typically through a uniform reject mechanism, transport-plan statistics, or task-specific admissibility constraints layered on top of the transport problem. What is still missing in the partial-transport literature, to the best of our knowledge, is a general formulation in which the unmatched policy itself is represented directly in the objective through pointwise costs on the marginals.

\section{IC-POT: Formulation and Structural Properties}

This section presents the IC-POT framework. We first introduce  its slack formulation and the associated reduced Lagrangian problem that positions it relative to classical partial transport. We next derive the dual interpretation and the resulting admissibility structure. We finally present an equivalent augmented-support view, mainly as a well-posedness and implementation statement. Proofs for the formulation and augmented-support results are in Appendix~\ref{app:formulation-proofs}; proofs for the dual and sparsity statements are in Appendix~\ref{app:dual-structure-proofs}.

\subsection{Slack formulation and reduced Lagrangian view}

Using the notations of Section~2, let
\[
  c_s\in\mathbb{R}_+^n,\qquad c_t\in\mathbb{R}_+^m
\]
be pointwise unmatched costs on the source and target supports. IC-POT solves the following constrained OT problem
\begin{equation}
\label{eq:apot-slack}
\begin{aligned}
\min_{P,u,v} \quad &
\langle C, P \rangle + \langle c_s, u \rangle + \langle c_t, v \rangle \\
\text{s.t.}\quad &
P \mathbf{1}_m + u = \mu, \\
&
P^\top \mathbf{1}_n + v = \nu, \\
&
P_{ij} \ge 0, \quad u_i \ge 0, \quad v_j \ge 0.
\end{aligned}
\end{equation}
where $u_i$ and $v_j$ are the unmatched source and target masses. This is the native modeling form of IC-POT: the pairwise cost $C$ still governs matching as in (\ref{eq:balanced-kantorovich}), while $c_s$ and $c_t$ price the local alternative of leaving mass unmatched directly on the original supports.

\begin{proposition}[Generalized Lagrangian partial-transport form]
\label{prop:subcoupling-form}
Problem~\eqref{eq:apot-slack} is equivalent, up to the additive constant
$\langle c_s,\mu\rangle+\langle c_t,\nu\rangle$, to
\[
  \min_{P\in\Gamma_{\le}(\mu,\nu)}
  \sum_{i=1}^n\sum_{j=1}^m
  \bigl(C_{ij}-c_s(i)-c_t(j)\bigr)P_{ij},
\]
where
\[
  \Gamma_{\le}(\mu,\nu)
  =
  \{P\ge 0:\; P\mathbf{1}_m\le \mu,\; P^\top\mathbf{1}_n\le \nu\}.
\]
\end{proposition}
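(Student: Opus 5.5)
The plan is to eliminate the slack variables $u,v$ explicitly and show that the feasible set and the objective of \eqref{eq:apot-slack} transform exactly into those of the sub-coupling problem. The map between feasible points will be $(P,u,v)\mapsto P$, with inverse $P\mapsto (P,\,\mu-P\mathbf{1}_m,\,\nu-P^\top\mathbf{1}_n)$. Under this map, optimal values differ by the constant $\langle c_s,\mu\rangle+\langle c_t,\nu\rangle$, and minimizers correspond one-to-one.

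\textbf{Step 1 (feasible sets).} Suppose $(P,u,v)$ is feasible for \eqref{eq:apot-slack}. The equality constraints force $u=\mu-P\mathbf{1}_m$ and $v=\nu-P^\top\mathbf{1}_n$. The sign constraints $u\ge0$ and $v\ge0$ are then exactly $P\mathbf{1}_m\le\mu$ and $P^\top\mathbf{1}_n\le\nu$. Together with $P\ge0$, this gives $P\in\Gamma_{\le}(\mu,\nu)$. Conversely, any $P\in\Gamma_{\le}(\mu,\nu)$ yields a feasible triple through the same formulas. Since $u,v$ are uniquely determined by $P$, the projection is a bijection between the two feasible sets.

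\textbf{Step 2 (objective).} Substitute the expressions for $u$ and $v$:
\[
\langle c_s,\mu-P\mathbf{1}_m\rangle=\langle c_s,\mu\rangle-\sum_{i,j}c_s(i)P_{ij},\qquad
\langle c_t,\nu-P^\top\mathbf{1}_n\rangle=\langle c_t,\nu\rangle-\sum_{i,j}c_t(j)P_{ij}.
\]
Adding $\langle C,P\rangle$ gives
\[
\langle c_s,\mu\rangle+\langle c_t,\nu\rangle+\sum_{i,j}\bigl(C_{ij}-c_s(i)-c_t(j)\bigr)P_{ij}.
\]
So along the bijection the two objectives differ by that constant. It follows that the optimal values differ by the same constant and that minimizers correspond. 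Existence of minimizers is a side remark: $\Gamma_{\le}(\mu,\nu)$ is a nonempty compact polytope (it contains $0$, and $0\le P_{ij}\le\min(\mu_i,\nu_j)$), and the objective is linear.

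There is no real obstacle here, since the argument is pure bookkeeping. The one point needing care is to state precisely what ``equivalent'' means: an objective-preserving bijection up to the additive constant, not merely equal optimal values. Making this explicit matters because later results, such as the dual caps and admissibility, rely on identifying optimal plans of the two problems.
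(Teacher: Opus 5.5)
Your proposal is correct and follows the paper's own proof: you solve the equality constraints for $u$ and $v$, observe that $u,v\ge 0$ is exactly $P\in\Gamma_{\le}(\mu,\nu)$, and substitute into the objective to get the constant $\langle c_s,\mu\rangle+\langle c_t,\nu\rangle$ plus the reduced cost. Your added remarks are accurate but do not change the argument: that the projection is a bijection so minimizers correspond, and that minimizers exist because $\Gamma_{\le}(\mu,\nu)$ is a nonempty compact polytope.
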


This delivers an entrypoint for the comparison with the Lagrangian formulation of Caffarelli--McCann \citep[Eq.~(1.7)]{caffarelli2010free}. IC-POT replaces the scalar reward for transported mass with the separable rebate $c_s(i)+c_t(j)$, one term on each marginal side. The decomposition preserves the interpretation of $c_s$ and $c_t$ as side-specific rejection costs whose reduced form acts through the pairwise rebate.

\begin{proposition}[Constant-rebate specialization]
\label{prop:constant-rebate-specialization}
If
\[
  c_s(i)=\alpha,\qquad c_t(j)=\beta
\]
for all $i,j$, then Proposition~\ref{prop:subcoupling-form} reduces to a constant-rebate partial-transport objective with uniform rebate $\lambda=\alpha+\beta$.
\end{proposition}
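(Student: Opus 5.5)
We obtain this directly from Proposition~\ref{prop:subcoupling-form} by substituting the constant costs into the reduced objective. First we write the additive constant explicitly: with $c_s\equiv\alpha$ and $c_t\equiv\beta$ it becomes $\langle c_s,\mu\rangle+\langle c_t,\nu\rangle=\alpha\|\mu\|_1+\beta\|\nu\|_1$. This depends only on the data $(\mu,\nu,\alpha,\beta)$, so it does not affect the set of minimizers. Next we substitute into the reduced cost, which gives $C_{ij}-c_s(i)-c_t(j)=C_{ij}-(\alpha+\beta)$ for every pair $(i,j)$. The reduced objective therefore splits as
\[
  \sum_{i,j}\bigl(C_{ij}-\lambda\bigr)P_{ij}
  =\langle C,P\rangle-\lambda\,\mathbf{1}_n^\top P\,\mathbf{1}_m,
  \qquad \lambda=\alpha+\beta,
\]
to be minimized over the same feasible set $\Gamma_{\le}(\mu,\nu)$.

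We then identify this with the constant-rebate (Lagrangian) partial-transport objective of Caffarelli--McCann \citep[Eq.~(1.7)]{caffarelli2010free}. That objective minimizes the transport cost minus $\lambda$ times the total transported mass $\|P\|_1$ over sub-couplings. The feasible set is identical, and the two objectives agree term by term, so the two problems have the same minimizers, and their optimal values differ only by the constant above. We will also note a consequence: only the sum $\alpha+\beta$ enters the reduced problem. The split of the rebate between source and target therefore affects only the constant and not the optimal plans.

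No step is a real obstacle. The point needing the most care is making the word ``reduces'' precise. We read it as identical feasible sets and identical objectives up to a data-dependent constant, hence identical argmin sets. We will also check that the sign convention matches the cited formulation: there the reward for transported mass enters with a minus sign in the minimization, which is consistent with $-\lambda\|P\|_1$ here.
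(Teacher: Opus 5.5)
Your proposal is correct and follows the paper's proof: substitute the constant costs into the reduced objective of Proposition~\ref{prop:subcoupling-form} and rewrite it as $\langle C,P\rangle-\lambda\sum_{i,j}P_{ij}$ over $\Gamma_{\le}(\mu,\nu)$ with $\lambda=\alpha+\beta$. This is exactly the Caffarelli--McCann constant-rebate Lagrangian form. The paper additionally remarks, without proof, that partial-W is recovered by adding the augmented marginals of \citet{chapel2020partial}, which the statement itself does not require.
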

The constant case recovers the global-rebate regime underlying constant-cost partial OT; the partial-W construction of \citet{chapel2020partial} follows after choosing augmented marginals that encode the transported-mass budget. IC-POT makes the rebate pointwise and side-specific, turning unmatched behavior into an explicit modeling variable with local structure.

\subsection{Dual caps and structural consequences}

The reduced Lagrangian form clarifies the mathematical position of IC-POT. The dual then shows how the unmatched costs act inside the optimization itself.

\begin{proposition}[Dual formulation]
\label{prop:dual}
The dual of Problem~\eqref{eq:apot-slack} is
\begin{equation}
\label{eq:apot-dual}
\begin{alignedat}{2}
\max_{f,g}\quad & \sum_{i=1}^n f_i\mu_i+\sum_{j=1}^m g_j\nu_j \\
\text{s.t.}\quad & f_i+g_j\le C_{ij}, &\quad& i=1,\dots,n,\; j=1,\dots,m,\\
& f_i\le c_s(i), &\quad& i=1,\dots,n,\\
& g_j\le c_t(j), &\quad& j=1,\dots,m.
\end{alignedat}
\end{equation}
\end{proposition}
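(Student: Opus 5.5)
The plan is to derive the dual by standard linear-programming (Lagrangian) duality applied directly to the slack formulation~\eqref{eq:apot-slack}. This is a finite-dimensional LP in the variables $(P,u,v)\in\mathbb{R}_+^{n\times m}\times\mathbb{R}_+^n\times\mathbb{R}_+^m$, with $n+m$ equality constraints. We attach a free multiplier $f_i$ to the source constraint $\sum_j P_{ij}+u_i=\mu_i$ and a free multiplier $g_j$ to the target constraint $\sum_i P_{ij}+v_j=\nu_j$. The Lagrangian is
\[
\mathcal{L}(P,u,v;f,g)=\sum_{i,j}\bigl(C_{ij}-f_i-g_j\bigr)P_{ij}+\sum_i\bigl(c_s(i)-f_i\bigr)u_i+\sum_j\bigl(c_t(j)-g_j\bigr)v_j+\sum_i f_i\mu_i+\sum_j g_j\nu_j .
\]

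Next, we compute the dual function $\inf_{P,u,v\ge 0}\mathcal{L}$. Because $\mathcal{L}$ is linear in each nonnegative variable, the infimum is $-\infty$ unless every coefficient is nonnegative. These coefficient conditions are exactly the three families of constraints in~\eqref{eq:apot-dual}:
\begin{itemize}
\item $f_i+g_j\le C_{ij}$ from the coefficient of $P_{ij}$;
\item $f_i\le c_s(i)$ from the coefficient of $u_i$;
\item $g_j\le c_t(j)$ from the coefficient of $v_j$.
\end{itemize}
When all three hold, the infimum is attained at zero and equals $\langle f,\mu\rangle+\langle g,\nu\rangle$. Maximizing over $(f,g)$ then gives~\eqref{eq:apot-dual}. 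Equivalently, one can write the primal in the standard form $\min\{c^\top x: Ax=b,\ x\ge 0\}$ and read off $\max\{b^\top y: A^\top y\le c\}$. Here the columns of $A$ for $P_{ij}$ are $e_i+e_{n+j}$, for $u_i$ they are $e_i$, and for $v_j$ they are $e_{n+j}$.

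To justify calling this \emph{the} dual, with zero duality gap and attained optima, we invoke strong duality for LPs. It therefore suffices to check that the primal is feasible and bounded. Feasibility holds with $P=0$, $u=\mu$, $v=\nu$. Boundedness holds because all costs are nonnegative, so the objective is at least $0$; alternatively, the dual is feasible with $f=0$, $g=0$, since $C,c_s,c_t\ge 0$. Hence both optima exist and coincide.

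The argument is routine. The only point requiring care is the sign bookkeeping: the multipliers must be free because the constraints are equalities, and the slack variables $u,v$ must produce the one-sided caps rather than equalities. I expect no genuine obstacle beyond that.
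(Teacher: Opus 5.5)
Your proposal is correct and follows essentially the same route as the paper: free multipliers on the two equality constraints, regrouping the Lagrangian by the coefficients of $P_{ij}$, $u_i$, $v_j$, and reading off the three families of dual constraints from the requirement that the infimum over the nonnegative orthant be finite. Your added strong-duality check (primal feasibility via $P=0$, $u=\mu$, $v=\nu$, and boundedness from $C,c_s,c_t\ge 0$) is a sound supplement that the paper leaves implicit and only invokes later for complementary slackness.
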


The caps $f_i\le c_s(i)$ and $g_j\le c_t(j)$ make unmatched costs local acceptance thresholds: small values make rejection cheap, while larger values protect a point under a wider range of transport costs.

\begin{proposition}[Complementary slackness]
\label{prop:cs}
Let $(P,u,v)$ and $(f,g)$ be primal-dual optimal. Then
\[
P_{ij} > 0 \;\Longrightarrow\; f_i + g_j = C_{ij},
\]
\[
u_i > 0 \;\Longrightarrow\; f_i = c_s(i),
\qquad
v_j > 0 \;\Longrightarrow\; g_j = c_t(j).
\]
\end{proposition}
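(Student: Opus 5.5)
The plan is to use standard finite-dimensional LP duality. Problem~\eqref{eq:apot-slack} is a linear program with equality constraints, and Proposition~\ref{prop:dual} identifies its dual as \eqref{eq:apot-dual}. The multipliers $f\in\mathbb{R}^n$ and $g\in\mathbb{R}^m$ attach to the two marginal equality constraints. The primal is feasible: take $P=0$, $u=\mu$, $v=\nu$. Its objective is bounded below by $0$ because all costs are nonnegative. Hence both problems have optimal solutions and strong duality holds. Given primal-dual optimal $(P,u,v)$ and $(f,g)$, the primal value therefore equals the dual value.

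The key step is to write the duality gap as a sum of nonnegative terms. Substitute $\mu = P\mathbf{1}_m + u$ and $\nu = P^\top\mathbf{1}_n + v$ into the dual objective:
\[
\sum_i f_i\mu_i+\sum_j g_j\nu_j=\sum_{i,j}(f_i+g_j)P_{ij}+\sum_i f_iu_i+\sum_j g_jv_j .
\]
Subtracting this from the primal objective gives
\[
0=\sum_{i,j}\bigl(C_{ij}-f_i-g_j\bigr)P_{ij}+\sum_i\bigl(c_s(i)-f_i\bigr)u_i+\sum_j\bigl(c_t(j)-g_j\bigr)v_j .
\]
By dual feasibility every bracket is nonnegative. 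By primal feasibility every multiplier $P_{ij},u_i,v_j$ is nonnegative. So each product vanishes individually.

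The three implications then follow at once. If $P_{ij}>0$ then $f_i+g_j=C_{ij}$. If $u_i>0$ then $f_i=c_s(i)$. If $v_j>0$ then $g_j=c_t(j)$.

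The only point needing care is justifying strong duality, that is, attainment and a zero gap. This follows from the LP duality theorem once feasibility and boundedness are noted as above, so no genuine obstacle is expected.
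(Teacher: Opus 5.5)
Your proposal is correct and follows essentially the same route as the paper, which invokes LP complementary slackness for the primal--dual pair and reads off the vanishing products $(C_{ij}-f_i-g_j)P_{ij}$, $(c_s(i)-f_i)u_i$, $(c_t(j)-g_j)v_j$ from the coefficient form of the Lagrangian. The only difference is that you derive the vanishing of these products yourself from strong duality via the duality-gap identity, after checking feasibility and boundedness, where the paper cites complementary slackness directly.
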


Accepted transport therefore saturates the pairwise constraint, whereas rejected mass saturates the side-specific cap on the corresponding support point.

\begin{proposition}[Admissible support]
\label{prop:dominated}
If $C_{ij} > c_s(i) + c_t(j)$, then every optimal solution of \eqref{eq:apot-slack} satisfies $P_{ij}=0$. If $C_{ij} = c_s(i) + c_t(j)$, then there exists an optimal solution with $P_{ij}=0$.
\end{proposition}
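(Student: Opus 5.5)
The plan is a direct exchange argument carried out in the slack formulation \eqref{eq:apot-slack}. Proposition~\ref{prop:subcoupling-form} shows that moving mass $t$ from the coupling entry $(i,j)$ into the slacks changes the objective by $t\,(c_s(i)+c_t(j)-C_{ij})$. Both claims follow from the sign of this quantity.

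\emph{Strict case.} Suppose $C_{ij}>c_s(i)+c_t(j)$ and let $(P,u,v)$ be optimal with $P_{ij}=t>0$. Define $(P',u',v')$ by
\[
P'_{ij}=0,\qquad u'_i=u_i+t,\qquad v'_j=v_j+t,
\]
with all other entries unchanged. The row sum of $P'$ at $i$ drops by $t$ while $u'_i$ rises by $t$, so the source marginal constraint still holds. The column-$j$ constraint is preserved in the same way, and nonnegativity is clear. The objective changes by
\[
-tC_{ij}+t\,c_s(i)+t\,c_t(j)<0,
\]
which contradicts optimality. Hence every optimal solution has $P_{ij}=0$.

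One could argue instead through complementary slackness (Proposition~\ref{prop:cs}) together with the dual caps of Proposition~\ref{prop:dual}. If $P_{ij}>0$, then $C_{ij}=f_i+g_j\le c_s(i)+c_t(j)$, again a contradiction. This route needs an optimal dual pair to exist, which holds here: the primal is a feasible LP bounded below by $0$, so strong duality applies. I would present the exchange argument and mention the dual one as a remark.

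\emph{Equality case.} Suppose $C_{ij}=c_s(i)+c_t(j)$. Existence of an optimal solution comes first, and it holds because the LP is feasible (take $P=0$, $u=\mu$, $v=\nu$) and its objective is bounded below by $0$, since all costs are nonnegative. Take any optimal solution and apply the same transfer. The objective change is now exactly $0$, so the modified point is still optimal and has $P_{ij}=0$.

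The only delicate point is checking that the transfer keeps feasibility in the equality-constrained slack form, and the constraint bookkeeping above settles this.
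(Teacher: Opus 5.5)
Your proposal is correct and matches the paper's proof: you move the mass $P_{ij}$ into both slacks $u_i$ and $v_j$, compute the objective change $\delta\bigl(c_s(i)+c_t(j)-C_{ij}\bigr)$, and read off both cases from its sign. Your explicit argument that an optimum exists in the equality case, and your remark on complementary slackness, are minor additions the paper leaves implicit.
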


An edge can therefore be active only when its transport cost is no larger than the sum of the local unmatched costs. This exact test is the main sparsity consequence of the model, and it yields the support restriction used in our solver, proved equivalent to the full LP in Appendix~\ref{app:optimization}.
A two-source one-target counterexample in Appendix~\ref{app:dual-structure-proofs} shows strict separation from constant-cost partial OT: pointwise unmatched costs can break symmetries that a uniform residual rule preserves.

In practice, the unmatched functions are used in two regimes. They may specify participation, by making some regions available for comparison and others cheap to reject; this is the role played by physical reliability priors in the geophysical experiment. They may also modulate rejection pressure on a fixed support, by protecting points according to confidence, sampling bias, or local geometry as illustrated in PU learning and OPDA. Appendix~\ref{app:intent-instantiation} gives the corresponding instantiation details.

\subsection{Augmented-support equivalence and implementation view}

%The last equivalence is mainly technical. 
Although IC-POT is introduced through side-specific slack variables, it can be rewritten as a standard balanced Kantorovich OT problem once one augments each support with a dummy point.

\begin{proposition}[Augmented-support equivalence]
\label{prop:augmented-support}
Eq.~\eqref{eq:apot-slack} is equivalent to balanced transport with
\[
\bar\mu=(\mu,\|\nu\|_1),\qquad
\bar\nu=(\nu,\|\mu\|_1),\qquad
\bar C=
\begin{pmatrix}
C & c_s\\
c_t^\top & 0
\end{pmatrix}
\]
\end{proposition}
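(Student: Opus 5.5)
The plan is to exhibit explicit maps in both directions between feasible points of \eqref{eq:apot-slack} and feasible couplings of the augmented balanced problem, show that these maps preserve the objective, and conclude that the two problems have equal optimal values and corresponding optimizers. Index the augmented coupling as
\[
\bar P=\begin{pmatrix} P & a\\ b^\top & w\end{pmatrix},\qquad P\in\mathbb{R}_+^{n\times m},\ a\in\mathbb{R}_+^n,\ b\in\mathbb{R}_+^m,\ w\ge 0 .
\]
The balanced constraints $\bar P\mathbf{1}=\bar\mu$ and $\bar P^\top\mathbf{1}=\bar\nu$ then read
\[
P\mathbf{1}_m+a=\mu,\qquad b^\top\mathbf{1}_m+w=\|\nu\|_1,\qquad P^\top\mathbf{1}_n+b=\nu,\qquad a^\top\mathbf{1}_n+w=\|\mu\|_1,
\]
and the cost is $\langle\bar C,\bar P\rangle=\langle C,P\rangle+\langle c_s,a\rangle+\langle c_t,b\rangle$, since the dummy-to-dummy entry costs $0$. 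One point needs care in the layout: $c_s$ is the last column and $c_t^\top$ the last row, so $a$ is source mass sent to the dummy target (source rejection, $u$) and $b$ is dummy-source mass sent to target $j$ (target rejection, $v$).

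\textbf{Forward map.} Given $\bar P$ feasible, set $u=a$ and $v=b$. The first and third constraints above are exactly the slack constraints of \eqref{eq:apot-slack}, the variables are nonnegative, and the objectives coincide. So the slack optimum is at most the augmented optimum.

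\textbf{Backward map.} Given $(P,u,v)$ feasible for \eqref{eq:apot-slack}, set $a=u$, $b=v$, and
\[
w=\|\mu\|_1-\mathbf{1}^\top u=\mathbf{1}^\top P\mathbf{1}.
\]
This $w$ is nonnegative because it is the total transported mass. The remaining dummy-row constraint $b^\top\mathbf{1}+w=\|\nu\|_1$ holds because
\[
\mathbf{1}^\top v=\|\nu\|_1-\mathbf{1}^\top P\mathbf{1}=\|\nu\|_1-w .
\]
Checking that both dummy marginals are satisfied by the single choice of $w$ is the only nontrivial step. It works because both reduce to the total transported mass, and this is precisely why the dummy masses must be the cross totals $\|\nu\|_1$ and $\|\mu\|_1$. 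The objective is again preserved, since $\bar C_{n+1,m+1}=0$.

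\textbf{Conclusion.} The two maps are mutually inverse on the $(P,u,v)$ components, with $w$ determined by $P$, and both are value preserving. The optimal values are therefore equal and optimizers correspond. Existence of an optimum follows because the augmented problem is a balanced LP over a nonempty compact transportation polytope: $\|\bar\mu\|_1=\|\bar\nu\|_1=\|\mu\|_1+\|\nu\|_1$, so the independent coupling is feasible.
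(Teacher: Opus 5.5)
Your proposal is correct and follows the paper's proof essentially step for step. It uses the same block decomposition of $\bar P$, sets the dummy-to-dummy entry to the total transported mass $\mathbf 1^\top P\mathbf 1$, verifies that both dummy marginals reduce to this quantity, and gets objective preservation from $\bar C_{n+1,m+1}=0$. Your extra observations, that $w$ is forced by $P$ (so the correspondence is a bijection) and that the augmented polytope is nonempty and compact, go slightly beyond what the paper makes explicit when it says the construction proves well-posedness.
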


This dummy construction proves well-posedness within the usual discrete Kantorovich framework and provides the implementation view used by our LP solver. The original slack formulation remains the modeling object: the specification consists of the $n+m$ rejection costs on the original supports, and the augmented cost matrix is the corresponding balanced-OT representation.
We provide a lightweight Python Library implementation of IC-POT with its sparse solver at \url{https://anonymous.4open.science/r/IC-POT-68F4/}
\section{Experiments}

We report numerical experiments to highlight the relevance and genericity of IC-POT compared with constant-cost partial OT.  Section~4.1 studies IC-POT as a local rejection mechanism: in PU learning and OPDA, the task is already a reject-structured problem, and the question is whether pointwise unmatched costs exploit sampling bias, confidence, or latent support geometry better. %than a uniform partial-transport rule.
Section 4.2 involves a geophysical case study where IC-POT can truly encode physical priors defining what should participate in the transport problem before the comparison is solved.

\subsection{Structured Rejection: From Controlled Regimes to Realistic Benchmarks}

\begin{figure}[!t]
\centering
\includegraphics[width=0.93\linewidth]{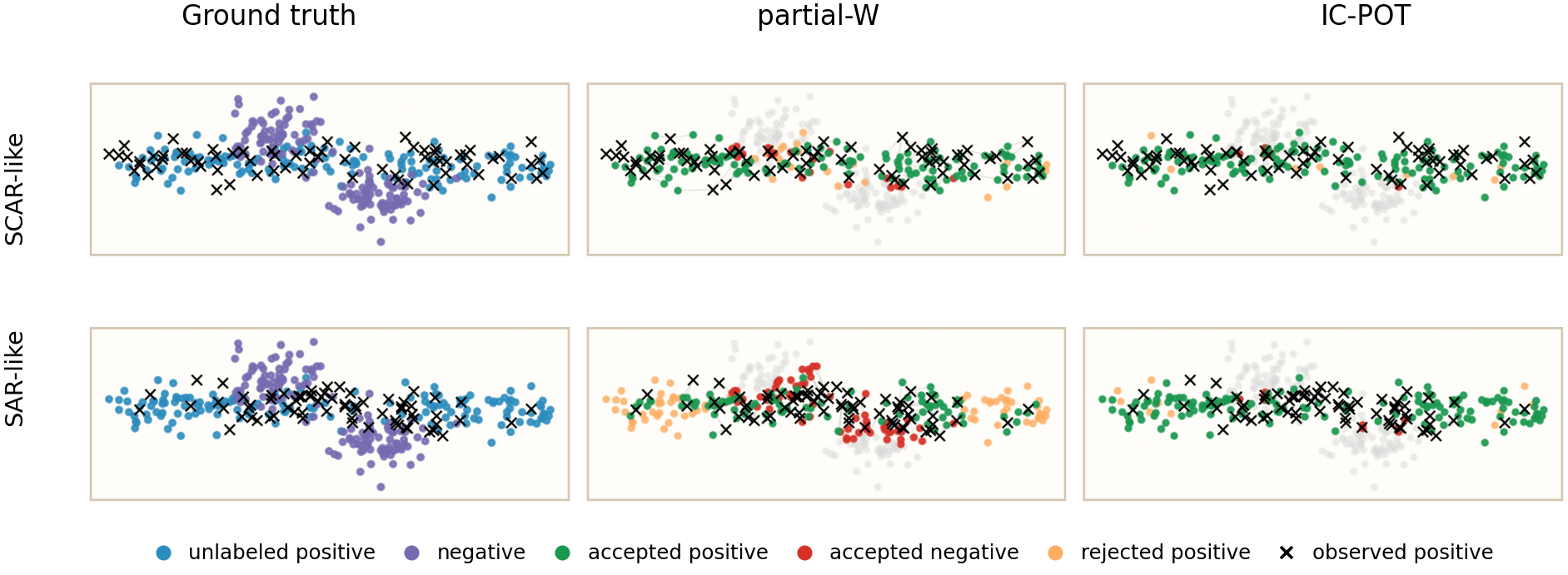}
\caption[Controlled PU selection-bias test]{Controlled PU selection-bias test on a representative seed. The top row is homogeneous / SCAR-like; the bottom row is heterogeneous / SAR-like in the PU sense. Columns compare the ground truth, the constant-cost \emph{partial-W} rule, and the aligned IC-POT profile used in the text. Black crosses denote observed positives, and thin segments indicate transported mass. Full setup and unmatched-function definitions are given in Appendix~\ref{app:pu-selection-bias-extra}.}
\label{fig:pu-toy}
\end{figure}

We begin with tasks where rejection is already part of the learning problem. Here, IC-POT does not change the representation or the transport geometry; it changes how rejection is priced. PU learning isolates the mechanism when the source of structure is known {\em a priori}. OPDA tests the same idea when the structure must be inferred from confidence and local support in a learned representation.

\subsubsection{PU Learning: Controlled Structured Rejection}
\label{subsec:pu-testbed}

Positive-unlabeled (PU) learning observes labeled positives and an unlabeled pool mixing latent positives and negatives. It is a direct testbed for partial transport: the learner must keep the positive-compatible part of the unlabeled distribution and reject the rest. We use it to test a precise claim in a setting where constant-cost partial OT is already relevant \citep{chapel2020partial}: a structured unmatched policy should help when the selection mechanism is itself structured. Under \emph{selected completely at random} (SCAR), observed positives remain representative of the latent positive support, so a global rejection rule is already aligned with the problem. Under \emph{selected at random} (SAR, in the PU-learning sense), selection depends on covariates, so some positive regions are under-observed even though they should remain compatible with the positive class. The desired reject rule is then no longer uniform: low observed support can mean sampling bias, not negative evidence.

The construction keeps the latent geometry fixed across regimes. Positives lie on a horizontal band, negatives form two lateral modes, and only the positive-selection mechanism changes: homogeneous / SCAR-like selection is nearly uniform, while heterogeneous / SAR-like selection concentrates near the center and under-represents the positive fringes. We compare two policies under the same transported-mass prior $s=\pi$. The baseline is the constant-cost \emph{partial-W} rule. IC-POT keeps the same transport geometry and global prior, and changes only the source-side unmatched cost. Its profile follows the selection mechanism: in the heterogeneous regime, lateral positive regions have low observed support for selection reasons, so discarding them should be expensive. This makes the unmatched policy a direct encoding of the known sampling mechanism, while the target-side cost remains uniform because the controlled bias concerns the observed-positive support. The exact setup, especially the parametrization of $c_s$ and $c_t$ are given in Appendix~\ref{app:pu-selection-bias-extra}.

Figure~\ref{fig:pu-toy} shows the expected transition. The gap is modest in the homogeneous regime and large in the heterogeneous one: the constant-cost rule accepts more incompatible unlabeled points and misses positive fringe mass, while IC-POT remains selective near the center and preserves the fringes. Over 5 seeds, \emph{partial-W} reaches mean F1 $0.82 \pm 0.03$ / $0.52 \pm 0.06$ and mean accuracy $0.81 \pm 0.03$ / $0.50 \pm 0.06$ in the homogeneous / heterogeneous regimes; IC-POT reaches $0.87 \pm 0.02$ / $0.86 \pm 0.01$ in F1 and $0.86 \pm 0.02$ / $0.85 \pm 0.01$ in accuracy. The gain comes from matching the reject rule to the selection mechanism, not from changing the transport geometry. Appendix~\ref{app:pu-selection-bias-extra} reports selection-bias sweeps, negative-geometry sensitivity, hyperparameter checks, and a misaligned control.

\subsubsection[OPDA]{OPDA: Structured Rejection from Data Geometry}
\label{subsec:opda-benchmark}

We here, move from the controlled PU setting to a realistic benchmark in the same family of reject-structured problems. In open-partial domain adaptation (OPDA), some source classes are absent from the target domain and the target domain may contain unknown classes. The reject decision must preserve transfer to shared classes while discarding target-private samples. OT-based alignment is already standard in domain adaptation \citep{courty2017optimal}, universal variants have been formulated in OT terms \citep{chang2022uniot}, and partial domain adaptation has been addressed with masked OT \citep{luo2023mot}. OPDA therefore tests whether a pointwise unmatched policy can improve the final reject layer under a fixed representation and transport geometry.

We keep the UniOOD representation, source prototypes, and calibration pipeline fixed, starting from the calibrated CLIP baseline of \citet{zhang2024clip} (\emph{CLIP dis}), and modify only the final reject layer. This isolates the unmatched-cost design: all variants use the same features, prototypes, calibration, and transport geometry. The uniform \emph{partial-W} baseline is the constant-cost policy introduced above, $c_s(i)=c_t(j)=A$. The two IC-POT variants encode unsupervised evidence that a target sample lies on the shared support. Entropy-based pricing protects samples with low posterior entropy, because the fixed representation already assigns them clearly to a source prototype. Prototype-support pricing protects samples whose neighbors support the same prototype, because local agreement is stronger evidence of shared-class structure than isolated confidence. These are structural diagnostics available at test time from the calibrated backend, not labels or oracle signals.

In all OPDA experiments we keep $c_t(j)=A$. The second marginal is a stable prototype set, while the relevant uncertainty is target-sample side: ambiguous, isolated, or locally unsupported target samples should be easy to reject, whereas confident or locally coherent samples should be protected. Appendix~\ref{app:opda-unmatched} gives the exact functions and hyperparameters, selected on two representative Office-Home transfers and then kept fixed across the full-table evaluation. Appendix~\ref{app:structure-metrics} reports unsupervised diagnostics showing that Office-31 and VisDA have clearer local organization, Office-Home is more diffuse, and DomainNet is mixed. This matters because the two IC-POT policies should not be expected to behave identically across datasets: entropy is more robust when neighborhoods are diffuse, while prototype-support is meaningful when local neighborhoods are coherent.

Table~\ref{tab:clipdistill-comparison} reports the resulting reject layers together with representative OPDA baselines from the CLIP distillation benchmark. All transport variants improve substantially over the same fixed CLIP distillation backbone, isolating the gain in the reject policy. The IC-POT variants also remain competitive with specialized OPDA methods while changing only the final decision layer \citep{lee2025tlsa}. A local entropy-threshold baseline stays below the transport variants, indicating that the gain comes from transport-coupled rejection rather than standalone confidence filtering. Across the 24 Office-31, Office-Home, and DomainNet transfers in Appendix~\ref{app:opda-per-transfer}, at least one IC-POT variant improves over \emph{partial-W} on 23 transfers; VisDA follows the same trend, giving 24/25 overall. The entropy-based variant alone improves over \emph{partial-W} on 19 of these 25 transfers.

\begin{table}[tb]
\centering
\caption[OPDA results in the CLIP distillation benchmark]{OPDA results in the standard CLIP distillation (CLIP dis) benchmark, reported in H-score and H$^3$-score. We include source-only (SO) and representative baselines reported in Table~5 of \citet{zhang2024clip}: DANCE \citep{saito2020dance}, OVANet \citep{saito2021ovanet}, UniOT \citep{chang2022uniot}, and CLIP dis \citep{zhang2024clip}. We also report a local entropy-threshold baseline, the constant-cost partial-transport baseline (\emph{partial-W}), and two IC-POT variants.}
\label{tab:clipdistill-comparison}
\small
\resizebox{\linewidth}{!}{
\begin{tabular}{lcccccccc}
\toprule
& \multicolumn{2}{c}{Office-31} & \multicolumn{2}{c}{Office-Home} & \multicolumn{2}{c}{VisDA} & \multicolumn{2}{c}{DomainNet} \\
\cmidrule(lr){2-3}\cmidrule(lr){4-5}\cmidrule(lr){6-7}\cmidrule(lr){8-9}
Method & H & H$^3$ & H & H$^3$ & H & H$^3$ & H & H$^3$ \\
\midrule
SO & 91.98 & 89.95 & 84.52 & 82.70 & 69.85 & 74.24 & 61.49 & 64.65 \\
DANCE &94.70 &\textbf{91.64} & 89.01 & 85.60 & 71.90 & 75.77 & 60.53 & 63.94 \\
OVANet & 93.36 & 90.80 & 85.42 & 83.33 & 59.47 & 66.07 & 70.70 & 71.10 \\
UniOT & 92.32 & 89.07 & 89.45 & 87.09 & 79.10 & 77.69 & 71.42 & 69.90 \\
CLIP dis & 86.74 & 86.45 & 86.40 & 83.73 & 84.74 & 84.80 & 72.37 & 72.08 \\
\midrule
CLIP dis + entropy-only & 90.44 & 88.62 & 88.12 & 84.91 & 81.02 & 82.29 & 77.39 & 75.52 \\
CLIP dis + partial-W & 94.08 & 90.85 & 93.17 & 87.97 & 86.08 & 85.69 & 79.20 &  76.70 \\
CLIP dis + IC-POT (entropy-based) & 94.23 & 90.94 &\textbf{93.23} &\textbf{88.00} & 86.31 & 85.84 & 79.70 & \textbf{77.01} \\
CLIP dis + IC-POT (prototype-support) & \textbf{95.12} & 91.54 & 91.78 & 87.06 & \textbf{87.28} & \textbf{86.48} & \textbf{79.71} & \textbf{77.01} \\
\bottomrule
\end{tabular}
}
\end{table}

The dataset-level pattern matches the structural motivation. Prototype-support gives the strongest gains in locally coherent regimes, reaching $95.12$ H on Office-31 and $87.28$ on VisDA, more than one point above \emph{partial-W}. Office-Home is more diffuse: \emph{partial-W} already captures most of the improvement, entropy-based pricing is nearly neutral, and prototype-support is less stable. DomainNet is intermediate, with small gains from both IC-POT variants and little separation between them. Thus the OPDA results support the same conclusion as the PU testbed in a realistic setting: once representation and calibration are fixed, part of the remaining performance depends on whether rejection is modeled as a uniform scalar rule or as a structure-dependent policy.

\subsection{Encoding geophysical priors: Structured Comparability in SWIM/SAR Wave Spectra}
\label{subsec:wave-spectrum-modeling}

This experiment addresses the analysis of geophysical data to test the case where the unmatched policy helps to define the comparison itself. The data are directional ocean-wave spectra retrieved from SWIM, the wave scatterometer onboard CFOSAT \citep{hauser2021swim}, and Synthetic Aperture Radar (SAR), a classical source of ocean-wave spectral information \citep{hasselmann1985sar}. Each ocean-wave spectrum distributes energy over wavenumber $k$ and propagation direction $\phi$; localized peaks correspond to wave systems such as swell or wind sea. The target is the comparable part of two retrieved ocean wave spectra: systems that can be related across observation chains with different transfer functions, artifacts, and sampling conditions.

  \begin{figure}[t]
    \centering
    \begin{minipage}{0.85\linewidth}
        \centering
        \includegraphics[width=\linewidth]{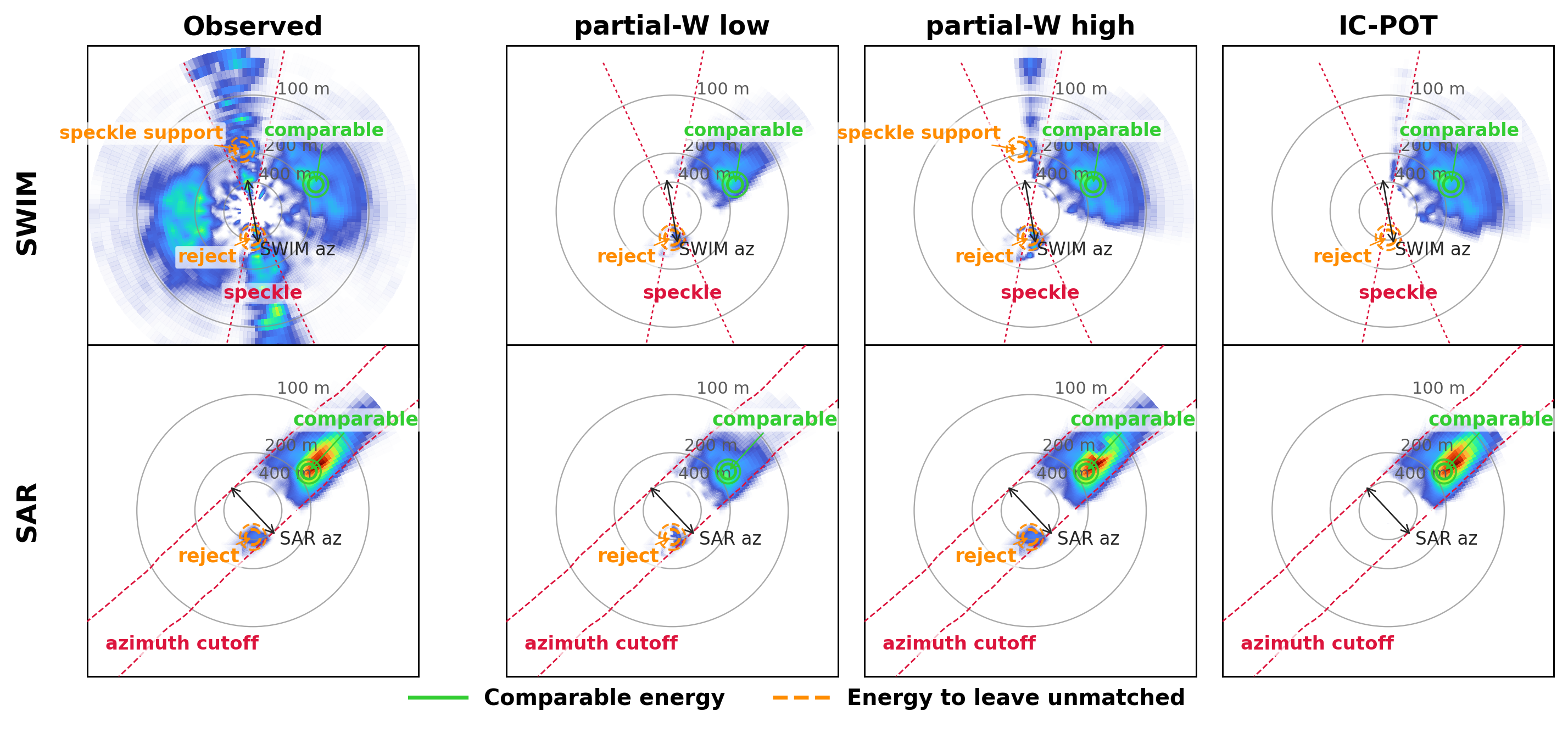}\\[-0.3em]
        {\small \textbf{(a)} Real SWIM-SAR co-location.}
    \end{minipage}

    \vspace{-0.2em}

    \begin{minipage}{\linewidth}
        \centering
        \includegraphics[width=0.85\linewidth]{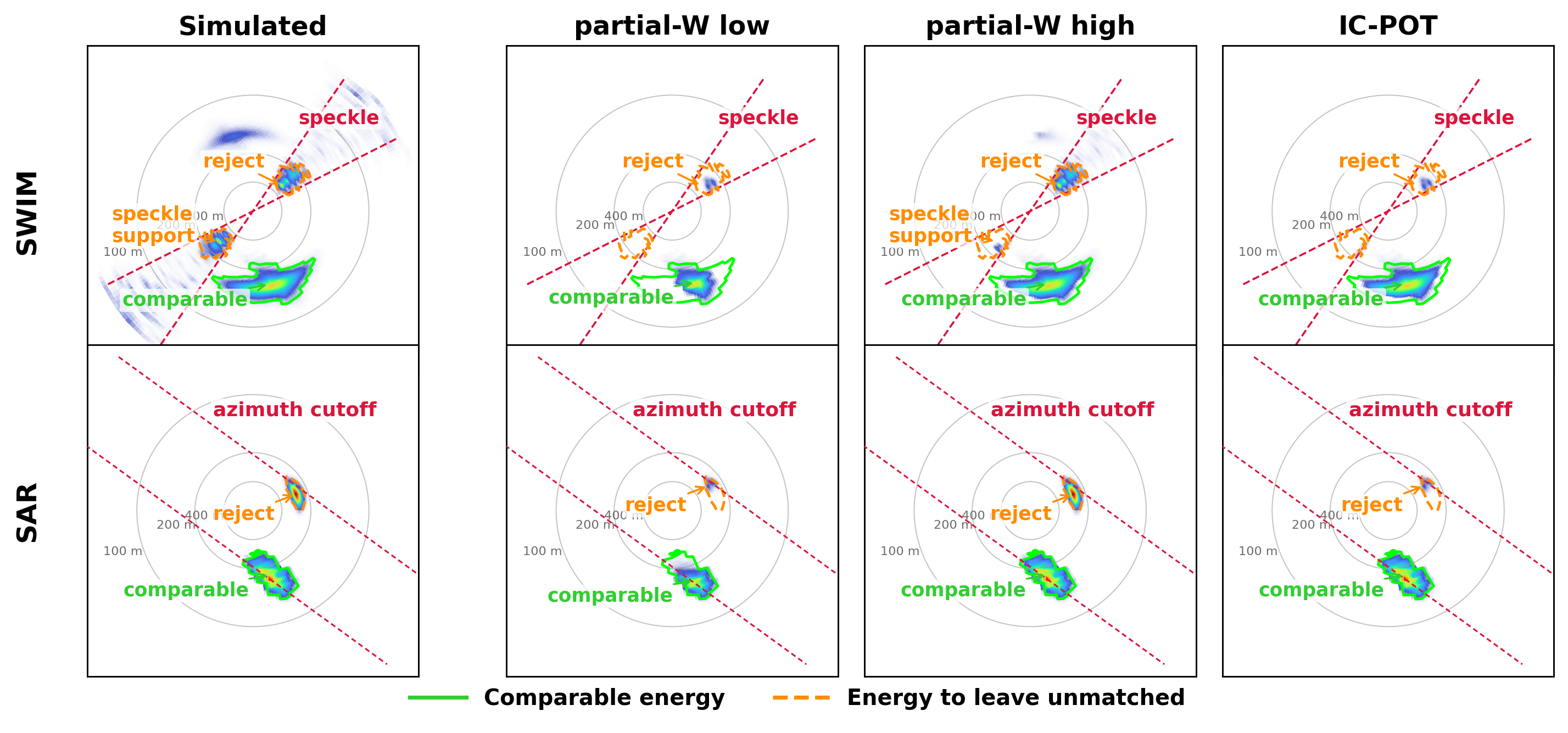}\\[-0.3em]
        {\small \textbf{(b)} Simulated SWIM-SAR case with known comparable and unmatched components.}
    \end{minipage}

    \vspace{-0.4em}

    \caption{Illustration of partial OT for SWIM-SAR ocean-wave spectra. (a) Real co-location: SWIM/SAR observed spectra and
  matched supports. (b) Simulated case: sensor-guided spectra and SAR-side mass matched to SWIM by two constant-cost
  \emph{partial-W} rules and by IC-POT. Green marks comparable wave energy; orange marks components to leave unmatched.}
    \label{fig:geophys-real-controlled}
  \end{figure}

Figure~\ref{fig:geophys-real-controlled}(a) depicts real co-located SWIM-SAR ocean-wave spectra. Both sensors provide ocean-wave spectra retrieved through sensor-specific transfer functions, acquisitions may be separated in space and time with a spatial and temporal delta, respectively, under 100km and 6h. SWIM derived wave spectra is quasi-symmetric (two-sided) and can be corrupted by speckle-dominated directional sectors and an axial ambiguity (angular sector within red dashed lines) ; whereas SAR derived wave spectra are impacted by azimuth cutoff, which displace energy in the spectral plane along azimuth direction\citep{hasselmann1985sar,stopa2015azimuthcutoff}, within the spectra area between the two dashed "azimuth cutoff" lines. These perturbating effects play different roles: speckle-dominated SWIM energy gives weak local evidence, cutoff-affected SAR energy can remain coherent after displacement, and one-sided SAR components with weak SWIM support naturally enter the unmatched channel.

IC-POT makes this comparability rule explicit. Because the two sensors carry different sensing modes, the SAR-side and SWIM-side unmatched costs encode different information: $c_s$ protects cutoff-displaced SAR mass when local SWIM support makes it comparable, while exposing one-sided or unsupported SAR components to rejection; $c_t$ protects reliable SWIM support and makes speckle-dominated sectors inexpensive to leave unmatched. The transport geometry is kept fixed; the prior acts only through local rejection prices. appendix~\ref{app:geophysical-motivation} details the parameterization of $c_s$ and $c_t$.

Figure~\ref{fig:geophys-real-controlled}(a) qualitatively compares IC-POT with constant-cost POT on a real co-location. In the displayed SWIM-SAR pair, the green-marked systems are separated in the spectral plane and have different directional spreads, but they correspond to energy from the same physical wave system. On the SAR side, this energy has been displaced by sensor effects and folded below the azimuth cutoff; it should therefore remain part of the comparison despite its geometric offset from the SWIM support. The orange-marked component has a different origin: it reflects a SAR directional-ambiguity artifact and lies close, in the spectral plane, to SWIM energy dominated by azimuthal speckle. Matching this component would create an artificial correspondence with unreliable SWIM support, rather than a comparison between physically consistent wave systems. The SWIM-side support row highlights the complementary effect: when the reliable SWIM support carries less energy than the corresponding SAR component, a larger matched mass can draw on nearby speckle-supported SWIM energy.

To quantitatively benchmark the considered POT schemes, we use 100 realistic synthetic SWIM-SAR pairs with sensor-specific effects and known comparable support, illustrated in Figure~\ref{fig:geophys-real-controlled}(b). We report normalized comparable recovery, unmatch precision, reliable loss, and spurious transport scores, using the same $c_s,c_t$ parameterization as for real SWIM/SAR data; details are in Appendix~\ref{app:geophysical-motivation}.

%Real co-locations make the intended comparison visible, while instrument-guided synthetic spectra provide known comparable support for quantitative evaluation. Each synthetic pair contains common wave systems, SAR cutoff displacement, SWIM speckle sectors, and one-sided components. The unmatched costs are built only from observable quantities: local energy support, an azimuth-cutoff prior on the SAR side, and a speckle-sector prior on the SWIM side. The cutoff prior protects coherent SAR energy that may have been displaced by the acquisition process when compatible SWIM support is present; the speckle prior makes SWIM-supported matches unreliable inside noisy directional sectors. We compare IC-POT to the \emph{partial-W} family of \citet{chapel2020partial}, i.e., the constant-cost specialization $c_s=c_t=A$, on the same grid and native relative masses. Metrics report comparable recovery, unmatch precision, reliable loss, and spurious transport; Appendix~\ref{app:geophysical-motivation} gives the cost definitions, scalar sweep, and additional cases.

\begin{table}[t]
  \centering
  \caption{Partial OT performance metrics for the synthetic SWIM/SAR ocean-wave spectra dataset: for each metric, we report
  mean $\pm$ standard deviation values over the 100 samples of the dataset. We refer the reader to the main text for the
  definition of the considered normalized evaluation metrics.}
  \label{tab:geophys-controlled}
  \small
  \resizebox{\linewidth}{!}{
  \begin{tabular}{lcccc}
  \toprule
  Method & Comparable recovery $\uparrow$ & Unmatch precision $\uparrow$ & Reliable loss $\downarrow$ & Spurious transport $
  \downarrow$ \\
  \midrule
  \emph{partial-W} low & $0.023 \pm 0.041$ & $0.691 \pm 0.047$ & $0.070 \pm 0.017$ & $\mathbf{0.019 \pm 0.005}$ \\
  \emph{partial-W} high & $\mathbf{0.999 \pm 0.013}$ & $0.761 \pm 0.045$ & $\mathbf{0.000 \pm 0.001}$ & $0.236 \pm 0.027$ \\
  IC-POT & $0.993 \pm 0.013$ & $\mathbf{0.770 \pm 0.044}$ & $\mathbf{0.000 \pm 0.001}$ & $0.031 \pm 0.09$ \\
  \bottomrule
  \end{tabular}
  }
  \end{table}

Figure~\ref{fig:geophys-real-controlled}(b) and Table~\ref{tab:geophys-controlled} summarize our evaluation for the synthetic dataset. The simulated comparisons reproduce the decision structure illustrated in Figure~\ref{fig:geophys-real-controlled}(a): the comparable system is displaced in the SAR spectrum by sensor effects and folded below the azimuth cutoff, while the one-sided SAR component has no reliable SWIM counterpart and lies near speckle-supported SWIM energy. Quantitatively, IC-POT keeps the comparable recovery of the high-price partial-W regime ($0.993$ vs. $0.999$)  while reducing spurious transport by more than a factor of 7 ($0.031$ vs. $0.236$), and it remains close to the low-price regime in spurious transport while recovering almost all comparable mass. A low partial-W price leaves comparable wave energy aside; a high price recovers the common system and also transports one-sided or speckle-supported mass. IC-POT separates these decisions because rejection is priced pointwise and differently on the two supports: cutoff-compatible SAR energy is protected when supported by SWIM, while speckle-supported or locally unsupported components remain inexpensive to reject. The full scalar sweep in Appendix~\ref{fig:geophys-appendix-tradeoff} shows the same pattern over the constant-cost family. The gain is therefore not only numerical. %It is also an expressibility result: 
IC-POT turns the intended SWIM/SAR comparison into a native partial-transport objective under fixed geometry, whereas a scalar reject rule can only move along a one-dimensional mass-rejection trade-off.

\section{Discussion and Limitations}

IC-POT changes what is specified in a partial-transport model. The transport cost $C$ defines the matching geometry, while $c_s$ and $c_t$ define how mass may leave the comparison. In the experiments these unmatched costs are prescribed from observable side information: a covariate-aligned selection proxy for PU learning, posterior confidence and neighborhood support for OPDA, and sensor-derived reliability maps for SWIM/SAR. Thus $c_s$ and $c_t$ are part of the model specification rather than a numerical tuning detail. Appendix~\ref{app:intent-instantiation} summarizes these instantiations, and Appendix~\ref{app:selective-color-transfer} gives a visual selective-transfer example.

This design is useful when the intended reject rule has structure. In PU learning and OPDA, the structure comes from sampling bias, confidence, and local geometry. In wave-spectrum comparison, the same mechanism has a more fundamental role: the unmatched profiles specify the reliable overlap to be compared before the transport plan is computed. This is precisely the role that a scalar partial-transport rule cannot express. The synthetic SWIM/SAR cases provide known comparable support for quantitative evaluation, while real co-locations show that the same profiles correspond to interpretable reliability patterns in observed spectra.

Unmatched policies can also be diagnosed without task labels. In OPDA, posterior entropy, margins, and neighborhood consistency indicate whether confidence-based or support-based rejection should be more reliable. In the geophysical case, cutoff estimates, speckle sectors, and local spectral support play the same role. These diagnostics allow one to check whether the protected and rejected regions match the intended support structure before downstream evaluation.

The main limitation is that the unmatched functions are specified by the user. This is appropriate when diagnostics or structural proxies are available, and suggests learning $c_s$ and $c_t$ from validation labels, weak supervision \citep{ratner2020snorkel}, or bilevel criteria\citep{franceschi2018bilevel,amos2017optnet}.Scalability is another limitation: larger applications will require admissible-edge pruning from dual caps, warm starts across reject profiles, or scalable one-dimensional, sliced, and mini-batch partial-transport approximations \citep{chapel2025pawl,nguyen2022mpot}. Standard entropic regularization is not a drop-in replacement; see Appendix~\ref{app:entropy-limitations}.

\section{Conclusion}

IC-POT extends partial OT by making unmatched behavior explicit, pointwise, and side-specific on the two marginals. Eliminating the slack variables connects the formulation to a separable-rebate generalization of Lagrangian partial transport, while the dual and admissibility results show how unmatched costs act inside the optimization. Empirically, the same mechanism supports structured rejection in PU learning and OPDA, and physically motivated comparability in SWIM/SAR wave-spectrum retrieval. The main contribution is therefore a targeted generalization of partial transport in which the reject policy becomes an explicit object of the formulation: it can be specified from side information, inspected before solving, and evaluated through the resulting transport plan.

\clearpage  
\bibliographystyle{plainnat}
\bibliography{references}
\clearpage
\appendix
\section*{Appendix organization}
The appendix follows the narrative of the paper. We first give the experimental material that supports the main modeling claim: the geophysical unmatched profiles, metrics, trade-off plots, and controlled cases, followed by the selective color-transfer illustration of expressivity. We then provide the mathematical proofs and optimization details. The final appendices give the PU and OPDA experimental details, structural diagnostics, per-transfer tables, and the discussion of entropic regularization.

\section{Geophysical motivation, sensor priors, and synthetic cases}
\label{app:geophysical-motivation}

The geophysical experiment in Section~\ref{subsec:wave-spectrum-modeling} is motivated by comparison problems in which two retrieved observations carry side-dependent reliability information. Ocean wave-spectrum retrieval gives a concrete instance. A directional spectrum assigns wave energy to wavenumber $k$ and propagation direction $\phi$; localized packets correspond to physical wave systems. A SWIM spectrum and a SAR spectrum can contain the same swell system alongside different retrieval artifacts. SWIM may contain speckle-dominated directional sectors and an axial ambiguity, while SAR is affected by azimuth cutoff and velocity-bunching effects that can displace energy in the spectral plane \citep{hauser2021swim,hasselmann1985sar,stopa2015azimuthcutoff}.

The intended comparison is therefore a comparability statement. Some spectral energy should remain available for transport because it represents the same wave system across the two retrievals. Some energy should be left unmatched because it is one-sided or supported by an unreliable sector alone. Some SAR energy should also be allowed to participate despite a larger spectral displacement when the displacement is consistent with cutoff or velocity bunching and local SWIM support is present. IC-POT expresses these decisions through side-specific unmatched costs: physically supported cutoff-displaced mass is protected from rejection, while unsupported or speckle-dominated mass remains inexpensive to leave unmatched.

\paragraph{Synthetic construction.}
The quantitative experiment uses synthetic spectra because the comparable support is then known by construction. Each case is generated on a north-referenced $(k,\phi)$ grid. Latent wave systems are represented by localized directional spectra with classical spreading functions. The SWIM observation applies axial ambiguity and adds speckle energy inside an azimuthal sector. The SAR observation applies an azimuth cutoff and a velocity-bunching/cutoff operator that redistributes part of the energy toward the admissible side of the cutoff. We also add one-sided systems on either side. The solver is run from SAR to SWIM, using the native relative masses of the two spectra.

For each synthetic pair we keep diagnostic maps on the same grid. The map
$b_{\mathrm{SAR}}(z)\in[0,1]$ marks SAR energy affected by cutoff or velocity-bunching displacement, $s_{\mathrm{SWIM}}(z)\in[0,1]$ marks the SWIM speckle sector, and $a_{\mathrm{SAR}}(z),a_{\mathrm{SWIM}}(z)\in[0,1]$ denote normalized observed spectral-energy maps. These maps are available from the simulator in the synthetic benchmark; in real co-locations they correspond to sensor diagnostics such as speckle maps, cutoff estimates, and local spectral support.

\paragraph{Local support.}
The cost profiles use a local-support operator to make decisions coherent at the scale of a wave system. Let $\mathcal S_{\rho_{\mathrm{loc}}}(E)$ be a normalized support map obtained by spreading the positive energy of $E$ inside a ball of radius $\rho_{\mathrm{loc}}$ in the $(\log k,\phi)$ grid. We use
\[
q_{\mathrm{SWIM}}(z)=
\mathcal S_{\rho_{\mathrm{loc}}}\!\left(a_{\mathrm{SWIM}}(1-s_{\mathrm{SWIM}})^{p_{\mathrm{sp}}}\right)(z),
\qquad
q_{\mathrm{SAR}}(z)=
\mathcal S_{\rho_{\mathrm{loc}}}\!\left(a_{\mathrm{SAR}}(\eta_{\mathrm{cut}}+(1-\eta_{\mathrm{cut}})b_{\mathrm{SAR}})\right)(z).
\]
We also compute $h_{\mathrm{sp}}(z)=\mathcal S_{\rho_{\mathrm{sp}}}(a_{\mathrm{SWIM}}s_{\mathrm{SWIM}})(z)$, which measures whether a SAR location is supported primarily by a SWIM speckle sector. Thus ordinary SAR energy is protected when reliable SWIM support is nearby and not explained only by speckle. Cutoff-affected SAR energy receives a direct protection branch, which encodes the case where a larger spectral displacement remains physically plausible. SWIM energy is protected when it is reliable and supported by coherent SAR energy.

\paragraph{Unmatched costs.}
The source-side unmatched cost is the SAR profile and the target-side unmatched cost is the SWIM profile. In the synthetic benchmark we first define normalized protection scores
\[
r_s(z)=
\left[
\alpha_{\mathrm{SAR}}(z)
\left(
\beta_{\mathrm{cut}}b_{\mathrm{SAR}}(z)^{\gamma_{\mathrm{cut}}}
\;+\;
\beta_{\mathrm{loc}}q_{\mathrm{SWIM}}(z)(1-h_{\mathrm{sp}}(z))^{p_{\mathrm{veto}}}
\right)
\right]_{\times},
\]
\[
r_t(z)=
\left[
\alpha_{\mathrm{SWIM}}(z)q_{\mathrm{SAR}}(z)
\left(1-s_{\mathrm{SWIM}}(z)\right)^{p_{\mathrm{sp}}}
\right]_{\times},
\]
where $\alpha_{\mathrm{SAR}}=a_{\mathrm{SAR}}^{0.35}$, $\alpha_{\mathrm{SWIM}}=a_{\mathrm{SWIM}}^{0.45}$ and $[\cdot]_{\times}$ denotes clipping to $[0,1]$. The unmatched costs are then obtained by the same affine map used throughout the paper,
\[
c_{\mathrm{SAR}}(z)=c_{\min}+(c_{\max}-c_{\min})r_s(z),
\qquad
c_{\mathrm{SWIM}}(z)=c_{\min}+(c_{\max}-c_{\min})r_t(z).
\]
The first score protects observed SAR energy through two observable mechanisms: a direct cutoff/VB branch for displaced mass, and a weaker local-support branch vetoed by nearby SWIM speckle. Unsupported SAR packets therefore remain cheap to reject. The second score protects observed SWIM energy only when SAR support is nearby and the SWIM sector is reliable; speckle-dominated mass remains inexpensive to leave unmatched. The transport cost itself remains the fixed cyclic Euclidean cost in $(\log k,\phi)$ for all methods. The reported runs use a fixed profile for all cases, with $\rho_{\mathrm{loc}}=0.16$, $\rho_{\mathrm{sp}}=0.20$, $p_{\mathrm{sp}}=7$, $\gamma_{\mathrm{cut}}=0.75$, $p_{\mathrm{veto}}=2.5$, $\beta_{\mathrm{cut}}=2.5$, $\beta_{\mathrm{loc}}=0.15$, $c_{\min}=5\cdot10^{-5}$, and $c_{\max}=3.0$. These constants are selected once and kept fixed across the benchmark.

\paragraph{Metrics.}
The synthetic construction gives binary masks for comparable and non-comparable supports on both sides. The reported metrics are computed for the SAR$\to$SWIM transport direction. Let $m_j^{\mathrm{SAR}}$ denote the matched mass leaving each SAR grid cell, $u_j^{\mathrm{SAR}}$ the unmatched SAR mass, and $u_i^{\mathrm{SWIM}}$ the unmatched SWIM mass. Comparable recovery is the fraction of SAR-side ground-truth comparable mass that is transported. Unmatch precision is the fraction of unmatched mass that lies in the ground-truth non-comparable support. Reliable loss is the SAR-side comparable mass left unmatched. Spurious transport is computed from the transport plan as the mass incident to SAR or SWIM cells marked non-comparable, with the overlap counted once. The first two metrics should be high, while the last two should be low.

\begin{figure}[p]
\centering
\includegraphics[width=\linewidth]{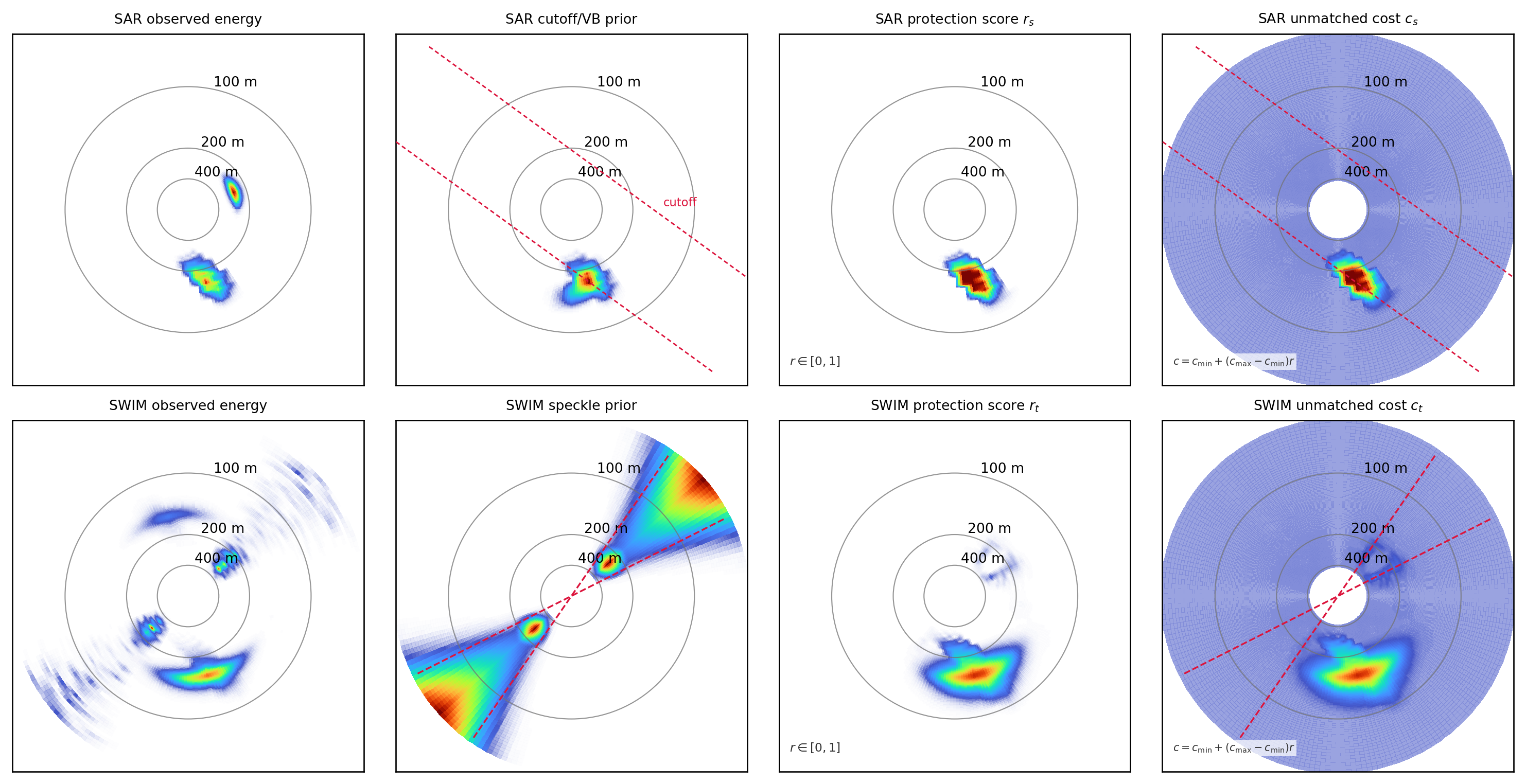}
\caption{Construction of the side-specific unmatched costs for the synthetic SWIM/SAR case used in Figure~\ref{fig:geophys-real-controlled}(b). The first row builds the SAR-side protection score $r_s$ from observed SAR energy, the cutoff/velocity-bunching prior, and reliable local SWIM support. The second row builds the SWIM-side protection score $r_t$ from observed SWIM energy, the speckle prior, and reliable local SAR support. The final costs are $c_{\min}+(c_{\max}-c_{\min})r_s$ and $c_{\min}+(c_{\max}-c_{\min})r_t$: low cost makes rejection inexpensive, while high cost protects mass from rejection. Red dashed curves mark sensor priors.}
\label{fig:geophys-appendix-costs}
\end{figure}

\paragraph{Partial-W trade-off.}
The constant-cost baseline uses a single unmatched price for all grid cells on both sides. Varying this price traces a one-dimensional trade-off. Low prices leave many artifacts unmatched and also leave comparable wave energy aside. High prices recover more comparable energy and also transport speckle-supported or one-sided mass. Figure~\ref{fig:geophys-appendix-tradeoff} shows this sweep for a representative synthetic case. IC-POT uses side-specific unmatched profiles under the same transport geometry, placing it away from the scalar partial-W curve.

\begin{figure}[p]
\centering
\includegraphics[width=0.72\linewidth]{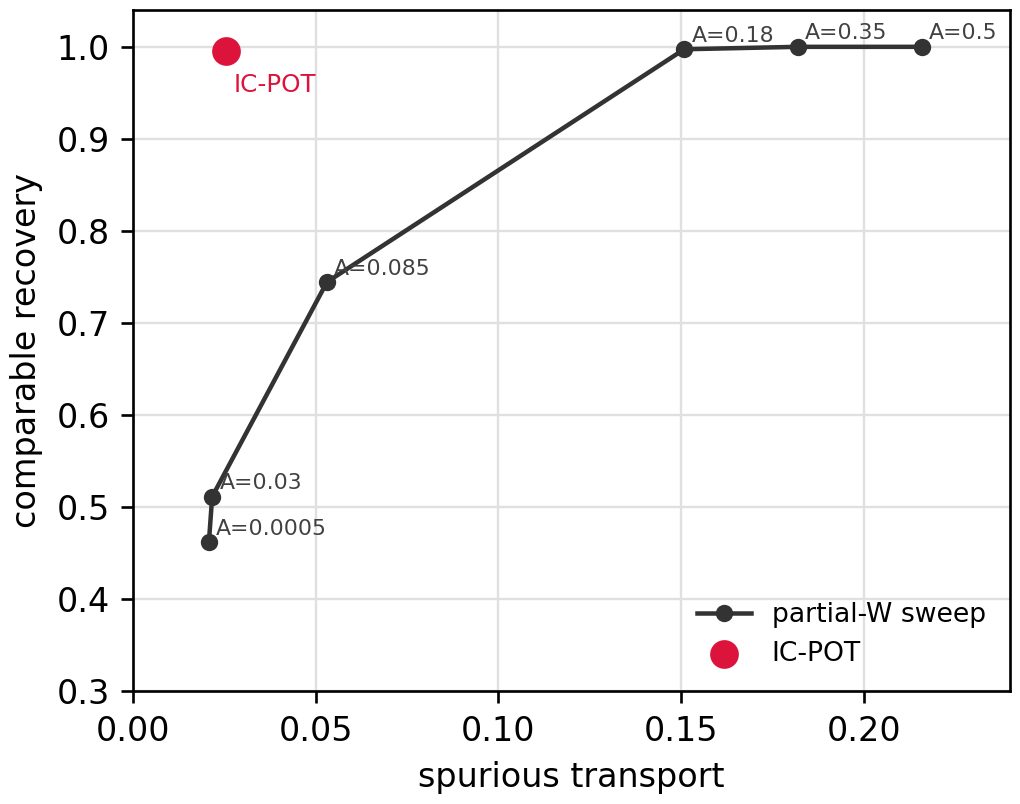}
\caption{Constant-cost partial-W trade-off on the synthetic SWIM/SAR case from Figure~\ref{fig:geophys-real-controlled}(b). The x-axis measures non-comparable mass that is still transported, and the y-axis measures recovery of comparable SAR energy. Varying the scalar partial-W price moves along a one-dimensional curve: low prices leave comparable cutoff-displaced energy aside, while high prices also transport speckle-supported or one-sided components. IC-POT uses side-specific pointwise unmatched costs under the same transport geometry and lies away from this scalar trade-off.}
\label{fig:geophys-appendix-tradeoff}
\end{figure}

\paragraph{Additional synthetic cases.}
Figure~\ref{fig:geophys-appendix-cases-a} shows representative synthetic cases from the benchmark. The number of wave systems, speckle sectors, cutoff orientations, and one-sided components vary across rows. The qualitative pattern is stable: a low global reject price leaves comparable energy aside, a high global reject price transports more non-comparable energy, and IC-POT recovers the comparable support while assigning unsupported components to the unmatched channel.

\begin{figure}[p]
\centering
\includegraphics[width=\linewidth]{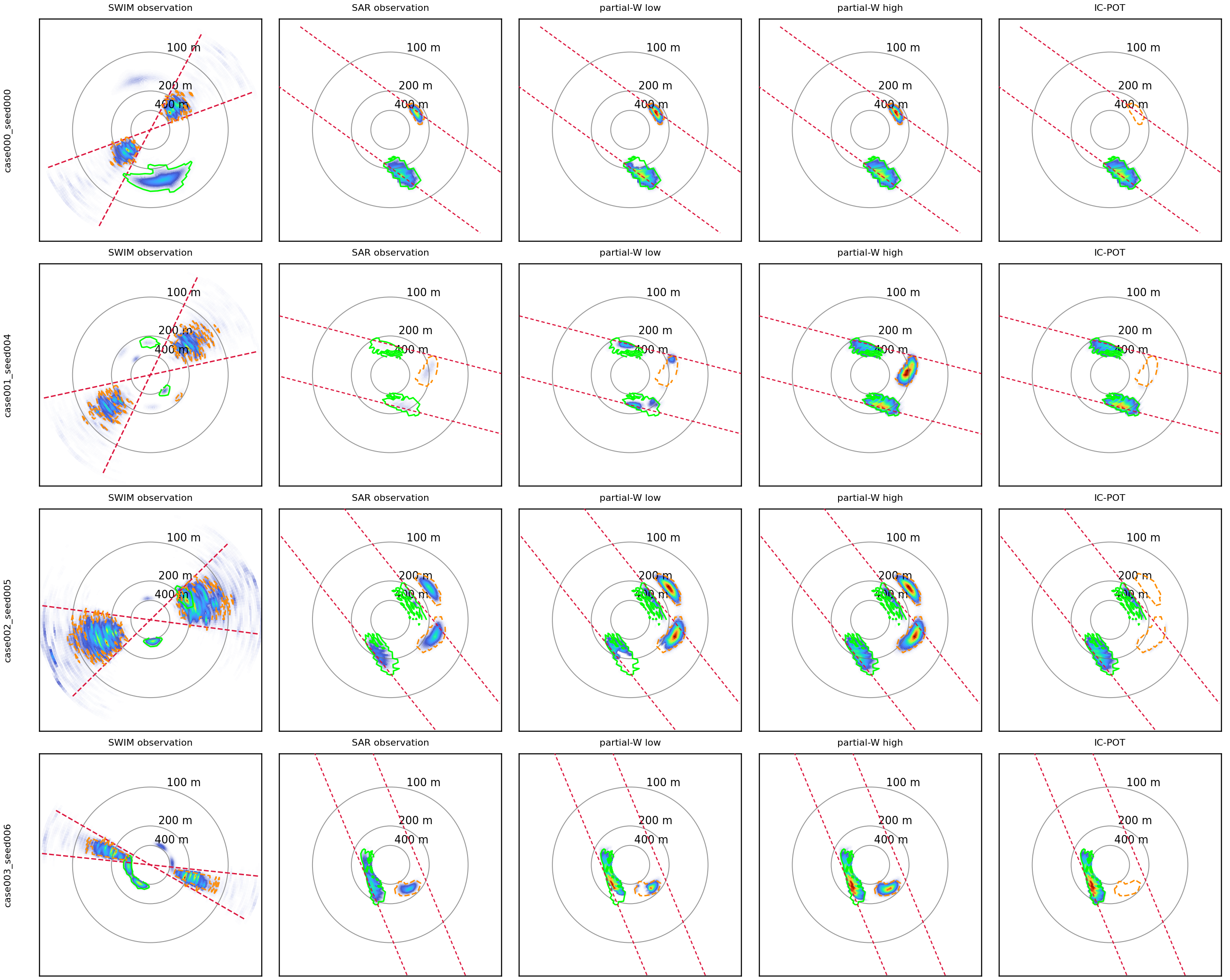}
\caption{Representative synthetic SWIM/SAR cases. Each row follows the same layout as Figure~\ref{fig:geophys-real-controlled}(b): observed SWIM, observed SAR, matched SAR mass under low partial-W, high partial-W, and IC-POT.}
\label{fig:geophys-appendix-cases-a}
\end{figure}

\clearpage
\section{Selective color transfer as a visual illustration}
\label{app:selective-color-transfer}

Selective color transfer provides a supplementary visual illustration of the same modeling language. The example is useful because the intended participation pattern is easy to inspect: one may want to choose which source regions can change, which regions should remain preserved, and which target colors may act as donors. In a constant-cost partial OT model, this selectivity must be imposed by external masks or post-processing. In IC-POT, it is encoded directly through the unmatched policies while keeping the transport geometry fixed.

Figure~\ref{fig:selective-intent}(a) shows a bilateral multi-intent edit. The source and target images, color cost, and discretization are fixed; only $(c_s,c_t)$ changes the participation pattern. Source-side activation and donor-side selection are therefore specified inside the objective. Figure~\ref{fig:selective-intent}(b) shows a second use case in which a soft segmentation signal is used continuously in $c_s$, whereas masked OT thresholds the same signal into a binary support. These examples make the mechanics of pointwise unmatched policies visually explicit.

\begin{figure}[h]
\centering
\begin{minipage}[t]{\linewidth}
\centering
\includegraphics[width=0.92\linewidth]{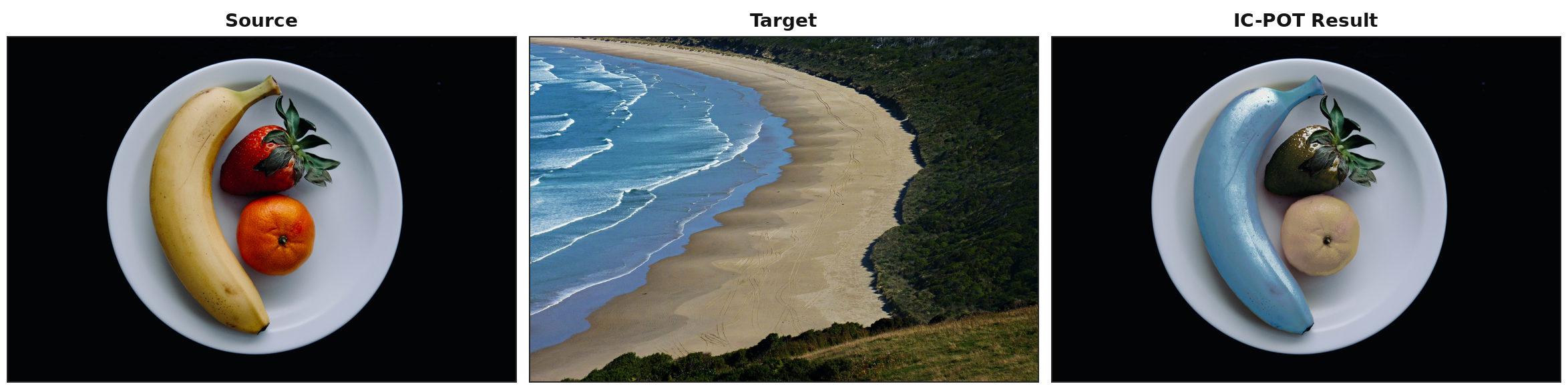}

\footnotesize (a) bilateral multi-intent transfer
\end{minipage}

\vspace{0.5em}

\begin{minipage}[t]{\linewidth}
\centering
\includegraphics[width=\linewidth]{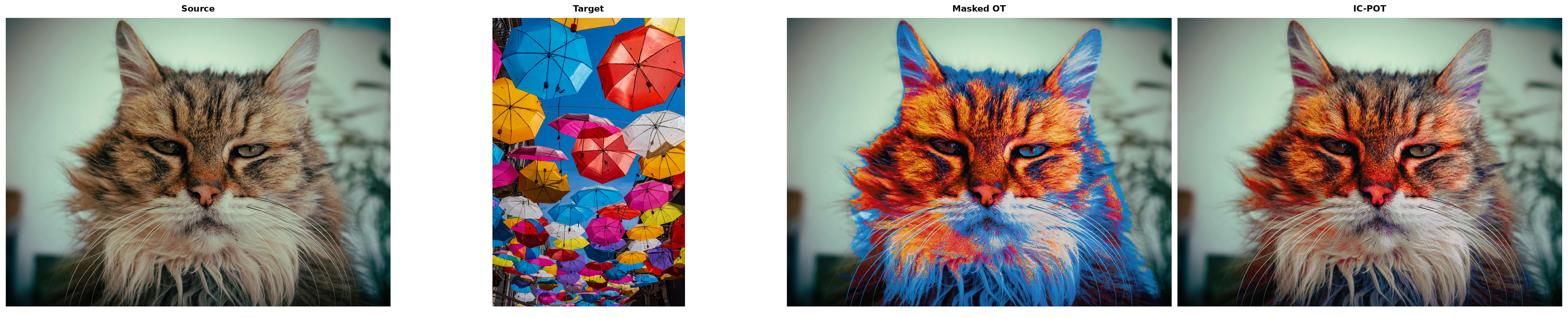}

\footnotesize (b) continuous participation vs.\ masked OT
\end{minipage}
\caption[Selective transfer under fixed transport geometry]{Supplementary selective transfer examples under fixed transport geometry. (a) Bilateral unmatched policies specify both source-side activation and donor-side selection. Appendix~\ref{app:selective-pairing-ablation} shows that changing only these policies induces different edits on the same pair. (b) On a fixed cat segmentation backend, masked OT thresholds a matte into a binary support, whereas IC-POT uses the same signal continuously in the source-side unmatched policy. Boundary metrics are reported in Appendix~\ref{app:selective-transfer-boundary}.}
\label{fig:selective-intent}
\end{figure}

\subsection{Pairing ablation}
\label{app:selective-pairing-ablation}

Figure~\ref{fig:selective-pairing-ablation} keeps fixed the same source image, the same target image, the same color cost matrix $C$, and the same transport discretization as Figure~\ref{fig:selective-intent}(a), and varies only the unmatched policies that determine which source and target subdomains are active. The figure visualizes the modeling mechanism: once participation is encoded through $(c_s,c_t)$, qualitatively different edits can be induced without changing the transport geometry itself. Panels (b) and (c) keep the active source region fixed and change only the donor region, while panel (d) keeps the donor region fixed and expands source-side participation. The ablation is therefore a compact visual check that the source-side and target-side unmatched policies control different modeling decisions.

\begin{figure}[h]
\centering
\includegraphics[width=\linewidth]{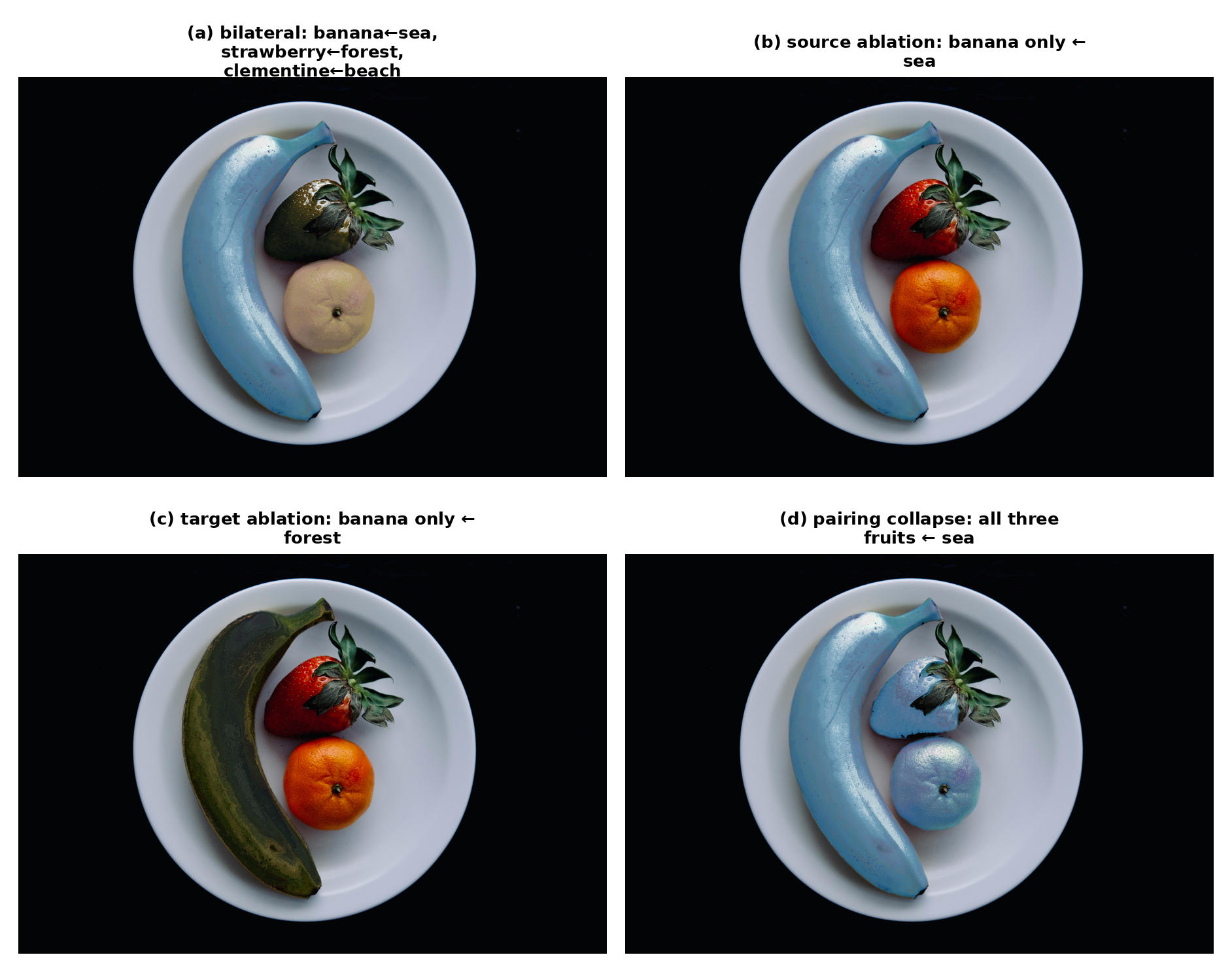}
\caption{Pairing ablation for the bilateral fruit example. The source-target pair, color cost matrix, and discretization are kept fixed; only the source-side and target-side unmatched policies are changed. Distinct edits are therefore induced by changing $(c_s,c_t)$ under the same transport geometry.}
\label{fig:selective-pairing-ablation}
\end{figure}

\subsection{Masked versus continuous participation}
\label{app:selective-transfer-masks}

Figure~\ref{fig:selective-transfer-masks} records the two source-side participation signals used in Figure~\ref{fig:selective-intent}(b). Both are derived from the same MODNet segmentation backend on the cat image. The masked-OT baseline uses the thresholded binary support, while IC-POT uses the corresponding matte continuously. This keeps the comparison focused on the transport model: the segmentation signal is held fixed, and only its use changes from hard admissibility to graded participation.

\begin{figure}[h]
\centering
\begin{minipage}[t]{0.34\linewidth}
\centering
\includegraphics[width=\linewidth]{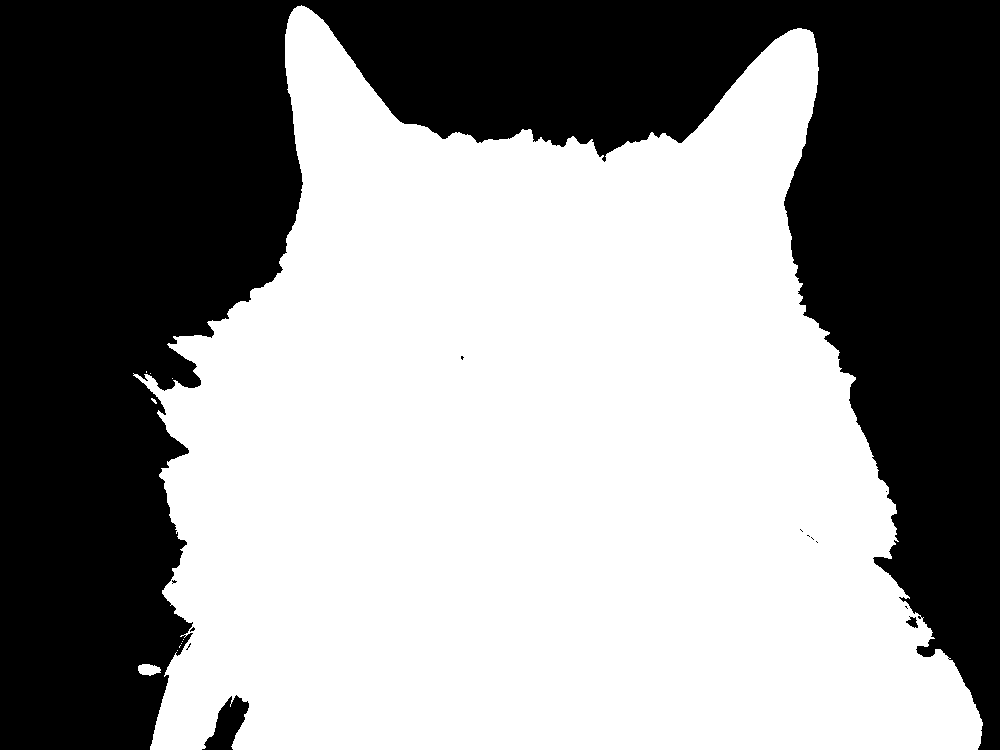}

\footnotesize Binary mask used by masked OT
\end{minipage}
\hfill
\begin{minipage}[t]{0.34\linewidth}
\centering
\includegraphics[width=\linewidth]{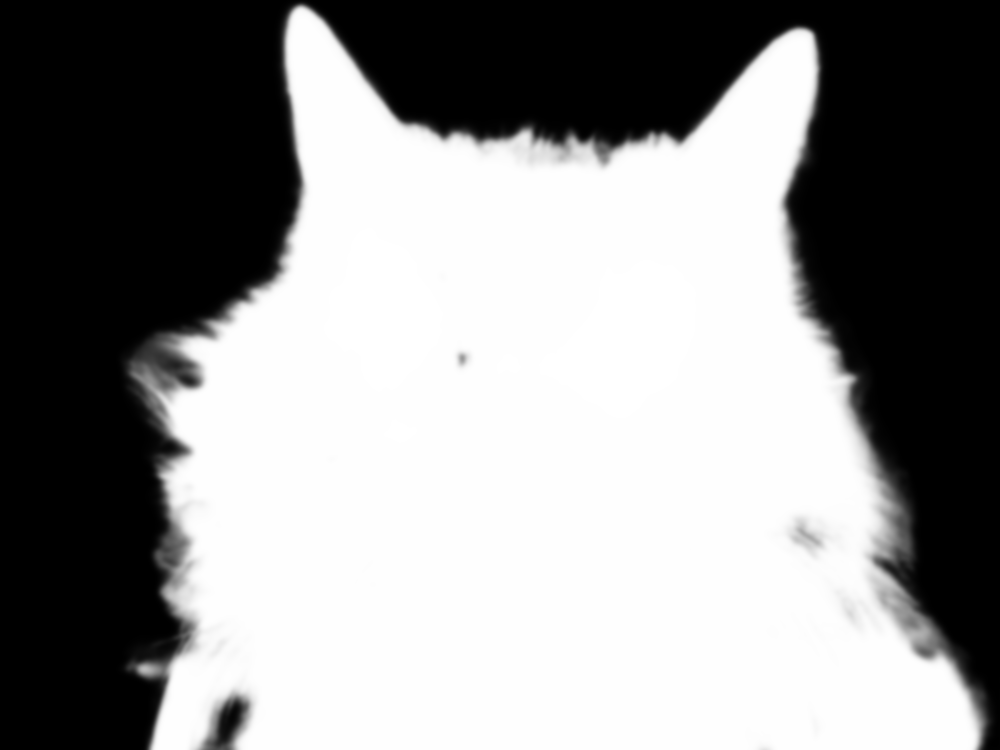}

\footnotesize Continuous matte used by IC-POT
\end{minipage}
\caption{Source-side participation signals used in the cat selective-transfer comparison. The two masks come from the same segmentation backend; masked OT thresholds the signal into a binary support, whereas IC-POT uses it continuously in the unmatched policy.}
\label{fig:selective-transfer-masks}
\end{figure}

\subsection{Boundary metrics}
\label{app:selective-transfer-boundary}

Table~\ref{tab:cat-boundary-metrics} reports a small quantitative comparison on the same cat example as Figure~\ref{fig:selective-intent}(b). The segmentation backend is held fixed, and the evaluation band is defined from the same soft matte in both cases by retaining pixels with intermediate participation values. We measure three quantities on the recoloring residual, i.e., the norm of the color change relative to the source image. The \emph{boundary residual gradient} is the average gradient magnitude of this residual inside the ambiguous band, the \emph{boundary residual local std} is the mean local standard deviation of the same residual inside the band, and \emph{outside leakage} averages the residual in a thin ring immediately outside the hard support. Lower values indicate smoother and more localized edits. These metrics are secondary to the main experiments, but they make the masked-versus-continuous distinction concrete on a fixed visual example.

\begin{table}[h]
\centering
\caption{Boundary metrics on the cat masked-vs-soft comparison. Lower is better for all three quantities.}
\label{tab:cat-boundary-metrics}
\small
\begin{tabular}{lccc}
\toprule
Method & Boundary residual gradient $\downarrow$ & Boundary residual local std $\downarrow$ & Outside leakage $\downarrow$ \\
\midrule
Masked OT & $3.58\times 10^{-2}$ & $5.67\times 10^{-2}$ & $9.61\times 10^{-3}$ \\
IC-POT & \textbf{$8.06\times 10^{-3}$} & \textbf{$1.17\times 10^{-2}$} & \textbf{$5.76\times 10^{-3}$} \\
\bottomrule
\end{tabular}
\end{table}

\section{Proofs for the Lagrangian and augmented-support formulations}
\label{app:formulation-proofs}

\subsection{Generalized Lagrangian form}

\begin{proof}[Proof of Proposition~\ref{prop:subcoupling-form}]
For any feasible triplet $(P,u,v)$ in Eq.~\eqref{eq:apot-slack}, the equality constraints imply
\[
u=\mu-P\mathbf 1_m,
\qquad
v=\nu-P^\top\mathbf 1_n.
\]
Since $u,v\ge 0$, the corresponding transport plan satisfies
\[
P\mathbf 1_m\le \mu,
\qquad
P^\top\mathbf 1_n\le \nu,
\]
that is, $P\in\Gamma_{\le}(\mu,\nu)$. Conversely, any $P\in\Gamma_{\le}(\mu,\nu)$ defines feasible slack variables by the same identities. Thus the slack formulation and the sub-coupling formulation have the same feasible degrees of freedom.

Substituting the slack variables into the objective gives
\[
\begin{aligned}
\langle C,P\rangle
&+ \langle c_s,\mu-P\mathbf 1_m\rangle
 + \langle c_t,\nu-P^\top\mathbf 1_n\rangle \\
&=
\langle c_s,\mu\rangle+\langle c_t,\nu\rangle
+ \sum_{i=1}^n\sum_{j=1}^m
\bigl(C_{ij}-c_s(i)-c_t(j)\bigr)P_{ij}.
\end{aligned}
\]
The first two terms are constant with respect to $P$, which proves the claimed equivalence.
\end{proof}

\subsection{Constant-rebate specialization}

\begin{proof}[Proof of Proposition~\ref{prop:constant-rebate-specialization}]
Assume $c_s(i)=\alpha$ for all $i$ and $c_t(j)=\beta$ for all $j$. By Proposition~\ref{prop:subcoupling-form}, IC-POT is equivalent, up to an additive constant, to
\[
\min_{P\in\Gamma_{\le}(\mu,\nu)}
\sum_{i,j}\bigl(C_{ij}-\alpha-\beta\bigr)P_{ij}
=
\min_{P\in\Gamma_{\le}(\mu,\nu)}
\langle C,P\rangle-\lambda \sum_{i,j}P_{ij},
\]
with $\lambda=\alpha+\beta$. This is exactly the constant-rebate Lagrangian form of partial transport: every transported unit receives the same reward, independently of its source and target locations.

The connection with partial-W follows by combining this constant-rebate structure with the augmented marginals used by \citet{chapel2020partial} to encode the desired transported-mass budget.
\end{proof}

\subsection{Dummy-point / slack equivalence}
\label{app:equivalence}

We prove here the augmented-support equivalence of Proposition~\ref{prop:augmented-support}. Consider again the slack formulation
\[
\min_{P,u,v} \langle C,P\rangle + \langle c_s,u\rangle + \langle c_t,v\rangle
\]
subject to
\[
P\mathbf 1_m + u = \mu,\qquad P^\top \mathbf 1_n + v = \nu,\qquad P,u,v \ge 0.
\]
We introduce the augmented marginals
\[
\bar\mu =
\begin{bmatrix}
\mu\\
\|\nu\|_1
\end{bmatrix},
\qquad
\bar\nu =
\begin{bmatrix}
\nu\\
\|\mu\|_1
\end{bmatrix},
\]
and the augmented cost matrix
\[
\bar C =
\begin{pmatrix}
C & c_s\\
c_t^\top & 0
\end{pmatrix}.
\]
The claim is that the slack formulation is equivalent to the balanced Kantorovich transport problem
\[
\min_{\bar P \in \Pi(\bar\mu,\bar\nu)} \langle \bar C,\bar P\rangle.
\]
Here
\[
\Pi(\bar\mu,\bar\nu)
:=
\left\{
\bar P \in \mathbb{R}_+^{(n+1)\times(m+1)} :
\bar P \mathbf{1}_{m+1} = \bar\mu,\;
\bar P^\top \mathbf{1}_{n+1} = \bar\nu
\right\}
\]
denotes the usual balanced Kantorovich polytope on the augmented supports.

\begin{proof}
We prove the equivalence by explicit construction in both directions.

\paragraph{From slack variables to an augmented coupling.}
Let $(P,u,v)$ be feasible for the slack formulation. Define
\[
e := \sum_{i=1}^n \sum_{j=1}^m P_{ij},
\qquad
\bar P =
\begin{pmatrix}
P & u\\
v^\top & e
\end{pmatrix}.
\]
Since $P,u,v \ge 0$, we immediately have $\bar P \ge 0$. It remains to verify the augmented marginal constraints.

For each original source index $i \in \{1,\dots,n\}$,
\[
\sum_{j=1}^{m+1} \bar P_{ij}
=
\sum_{j=1}^{m} P_{ij} + u_i
=
\mu_i
\]
by the slack constraint. For the dummy source row, we have
\[
\sum_{j=1}^{m+1} \bar P_{n+1,j}
=
\sum_{j=1}^{m} v_j + e.
\]
Using
\[
\sum_{j=1}^{m} v_j
=
\|\nu\|_1 - \sum_{i=1}^n \sum_{j=1}^m P_{ij}
=
\|\nu\|_1 - e,
\]
we obtain
\[
\sum_{j=1}^{m+1} \bar P_{n+1,j} = \|\nu\|_1.
\]
Hence $\bar P \mathbf{1}_{m+1} = \bar\mu$.

Similarly, for each original target index $j \in \{1,\dots,m\}$,
\[
\sum_{i=1}^{n+1} \bar P_{ij}
=
\sum_{i=1}^{n} P_{ij} + v_j
=
\nu_j.
\]
For the dummy target column,
\[
\sum_{i=1}^{n+1} \bar P_{i,m+1}
=
\sum_{i=1}^{n} u_i + e.
\]
Using
\[
\sum_{i=1}^{n} u_i
=
\|\mu\|_1 - \sum_{i=1}^n \sum_{j=1}^m P_{ij}
=
\|\mu\|_1 - e,
\]
we obtain
\[
\sum_{i=1}^{n+1} \bar P_{i,m+1} = \|\mu\|_1.
\]
Thus $\bar P^\top \mathbf{1}_{n+1} = \bar\nu$, so $\bar P \in \Pi(\bar\mu,\bar\nu)$.

Finally, the objective is preserved exactly by block expansion:
\[
\langle \bar C,\bar P\rangle
=
\langle C,P\rangle + \langle c_s,u\rangle + \langle c_t,v\rangle + 0\cdot e
=
\langle C,P\rangle + \langle c_s,u\rangle + \langle c_t,v\rangle.
\]

\paragraph{From an augmented coupling to slack variables.}
Conversely, let
\[
\bar P =
\begin{pmatrix}
P & u\\
v^\top & e
\end{pmatrix}
\in \Pi(\bar\mu,\bar\nu).
\]
Since $\bar P \ge 0$, we directly obtain $P \ge 0$, $u \ge 0$, and $v \ge 0$.

For each source index $i \in \{1,\dots,n\}$,
\[
\sum_{j=1}^{m} P_{ij} + u_i
=
\sum_{j=1}^{m+1} \bar P_{ij}
=
\bar\mu_i
=
\mu_i,
\]
hence
\[
P\mathbf 1 + u = \mu.
\]
Likewise, for each target index $j \in \{1,\dots,m\}$,
\[
\sum_{i=1}^{n} P_{ij} + v_j
=
\sum_{i=1}^{n+1} \bar P_{ij}
=
\bar\nu_j
=
\nu_j,
\]
hence
\[
P^\top \mathbf 1 + v = \nu.
\]
Therefore $(P,u,v)$ is feasible for the slack formulation.

Once again, the objective is preserved by construction:
\[
\langle \bar C,\bar P\rangle
=
\langle C,P\rangle + \langle c_s,u\rangle + \langle c_t,v\rangle.
\]
This proves the equivalence of the two optimization problems.
\end{proof}

This appendix only establishes equivalence with a balanced Kantorovich transport problem on an augmented support. It does not claim any metric or Wasserstein-distance property for general pointwise unmatched costs.

\clearpage
\section{Proofs for the Dual and Structural Results of Section~3}
\label{app:dual-structure-proofs}

\begin{proof}[Proof of Proposition~\ref{prop:dual}]
Introduce dual variables $f \in \mathbb{R}^n$ and $g \in \mathbb{R}^m$ for the equality constraints
\[
P \mathbf 1 + u = \mu,
\qquad
P^\top \mathbf 1 + v = \nu.
\]
The Lagrangian is
\[
\mathcal{L}(P,u,v;f,g)
=
\sum_{i,j} C_{ij} P_{ij}
+ \sum_i c_s(i) u_i
+ \sum_j c_t(j) v_j
+ \sum_i f_i\Bigl(\mu_i - \sum_j P_{ij} - u_i\Bigr)
+ \sum_j g_j\Bigl(\nu_j - \sum_i P_{ij} - v_j\Bigr).
\]
Rearranging terms gives
\[
\mathcal{L}(P,u,v;f,g)
=
\sum_i f_i \mu_i + \sum_j g_j \nu_j
+ \sum_{i,j} \bigl(C_{ij} - f_i - g_j\bigr) P_{ij}
+ \sum_i \bigl(c_s(i)-f_i\bigr) u_i
+ \sum_j \bigl(c_t(j)-g_j\bigr) v_j.
\]
Since $P,u,v \ge 0$, the infimum of the Lagrangian over $(P,u,v)$ is finite if and only if
\[
C_{ij} - f_i - g_j \ge 0,\qquad
c_s(i)-f_i \ge 0,\qquad
c_t(j)-g_j \ge 0.
\]
Under these conditions, the infimum is attained at $P=u=v=0$ and equals
\[
\sum_i f_i \mu_i + \sum_j g_j \nu_j.
\]
This yields the dual problem in Eq.~\eqref{eq:apot-dual}.
\end{proof}

\begin{proof}[Proof of Proposition~\ref{prop:cs}]
Problem~\eqref{eq:apot-slack} and its dual \eqref{eq:apot-dual} are a primal--dual pair of linear programs, so optimal primal and dual solutions satisfy complementary slackness. Using the coefficient form from the Lagrangian proof above, we obtain
\[
\bigl(C_{ij}-f_i-g_j\bigr) P_{ij}=0,
\qquad
\bigl(c_s(i)-f_i\bigr) u_i=0,
\qquad
\bigl(c_t(j)-g_j\bigr) v_j=0.
\]
Therefore $P_{ij}>0$ implies $f_i+g_j=C_{ij}$, $u_i>0$ implies $f_i=c_s(i)$, and $v_j>0$ implies $g_j=c_t(j)$.
\end{proof}

\begin{proof}[Proof of Proposition~\ref{prop:dominated}]
Fix $(i,j)$ and let $(P,u,v)$ be any feasible solution. Set $\delta := P_{ij}$ and define a new feasible point by
\[
P'_{ij}=0,\qquad
P'_{k\ell}=P_{k\ell}\ \text{for } (k,\ell)\neq(i,j),
\qquad
u'_i=u_i+\delta,
\qquad
v'_j=v_j+\delta,
\]
with all other coordinates of $u'$ and $v'$ unchanged. Feasibility is preserved because the row and column balances at $i$ and $j$ are maintained exactly.

The objective variation is
\[
\Delta
=
\Bigl(\sum_{k,\ell} C_{k\ell} P'_{k\ell} + \sum_k c_s(k) u'_k + \sum_\ell c_t(\ell) v'_\ell\Bigr)
- \Bigl(\sum_{k,\ell} C_{k\ell} P_{k\ell} + \sum_k c_s(k) u_k + \sum_\ell c_t(\ell) v_\ell\Bigr)
=
\delta\bigl(c_s(i)+c_t(j)-C_{ij}\bigr).
\]
If $C_{ij}>c_s(i)+c_t(j)$ and $\delta>0$, then $\Delta<0$, contradicting optimality. Hence every optimal solution must satisfy $P_{ij}=0$.

If $C_{ij}=c_s(i)+c_t(j)$, then $\Delta=0$, so the same local modification removes the edge without changing the objective value. Starting from any optimal solution, one obtains another optimal solution with $P_{ij}=0$.
\end{proof}

\begin{proof}[Strict separation from constant-cost partial OT]
Consider the instance with two source points and one target point:
\[
\mu=(1,1),\qquad \nu=(1),\qquad
C=
\begin{bmatrix}
0.3\\
0.3
\end{bmatrix},
\]
and unmatched costs
\[
c_s(1)=1,\qquad c_s(2)=0,\qquad c_t(1)=1.
\]
Under IC-POT, matching the first source point and leaving the second unmatched yields objective
\[
0.3 + 0 = 0.3.
\]
Matching the second source point and leaving the first unmatched yields objective
\[
0.3 + 1 = 1.3.
\]
Leaving everything unmatched yields objective $1+0+1=2$. Hence the unique optimal IC-POT plan matches the first source point to the target and leaves the second unmatched.

Now consider any constant-cost model with uniform unmatched penalties $\alpha$ on the source and $\beta$ on the target, on the same transport matrix $C$. The two source points are then perfectly symmetric: they have the same mass and the same transport cost to the unique target point. Therefore, if a plan matching the first source point and leaving the second unmatched is optimal, then the swapped plan matching the second and leaving the first unmatched has exactly the same objective value. No constant-cost model can therefore reproduce the unique IC-POT optimum above.
\end{proof}

\clearpage
\section{Optimization details and sparse solver}
\label{app:optimization}

The exact reference solver used in the paper is the linear program in Eq.~\eqref{eq:apot-slack}. In implementation terms, this simply means optimizing over the transport variables $P_{ij}$ together with the slack variables $u_i$ and $v_j$, with no entropic smoothing and no approximation of the objective.

The sparse variant keeps the same objective and the same slack variables, but restricts transport to an admissible edge set
\[
E := \{(i,j) : C_{ij} \le c_s(i) + c_t(j)\}.
\]
The sparse LP is therefore
\[
\min_{P,u,v}\;
\sum_{(i,j)\in E} C_{ij} P_{ij}
 + \sum_{i=1}^n c_s(i) u_i
 + \sum_{j=1}^m c_t(j) v_j
\]
subject to
\[
\sum_{j:(i,j)\in E} P_{ij} + u_i = \mu_i,
\qquad
\sum_{i:(i,j)\in E} P_{ij} + v_j = \nu_j,
\qquad
P,u,v \ge 0.
\]
Equivalently, one may view the sparse solver as fixing $P_{ij}=0$ on all edges outside $E$ and solving the same LP on the reduced support.

\begin{proposition}
\label{prop:sparse-eq}
Let $E=\{(i,j): C_{ij} \le c_s(i)+c_t(j)\}$. Then the sparse LP on $E$ has the same optimal value as the full LP in Eq.~\eqref{eq:apot-slack}. Moreover, every sparse optimum is optimal for the full problem, and there exists a full optimum that is feasible for the sparse problem.
\end{proposition}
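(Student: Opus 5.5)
The plan is to compare the two feasible sets directly and then use Proposition~\ref{prop:dominated} to pass between them. The sparse LP is the full LP~\eqref{eq:apot-slack} with the extra constraints $P_{ij}=0$ for $(i,j)\notin E$. Every sparse-feasible triplet $(P,u,v)$, extended by zeros off $E$, is therefore feasible for the full problem with the same objective value. This gives at once $\mathrm{OPT}_{\mathrm{full}}\le \mathrm{OPT}_{\mathrm{sparse}}$. Both problems are feasible, since $P=0$, $u=\mu$, $v=\nu$ works, and both are bounded below by $0$ because all costs are nonnegative. Hence both optima are attained, as for any bounded feasible LP.

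For the reverse inequality, I take an arbitrary full optimum $(P,u,v)$ and look at the edges $(i,j)\notin E$. There $C_{ij}>c_s(i)+c_t(j)$, so the first part of Proposition~\ref{prop:dominated} gives $P_{ij}=0$. In fact \emph{every} full optimum is supported on $E$, which is stronger than what is needed. Its restriction to $E$ is then sparse-feasible with the same value, so $\mathrm{OPT}_{\mathrm{sparse}}\le \mathrm{OPT}_{\mathrm{full}}$, and the two values coincide. This also settles the third claim: any full optimum is feasible for the sparse problem.

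For the middle claim, let $(P,u,v)$ be a sparse optimum, extended by zeros. It is full-feasible, and its objective equals $\mathrm{OPT}_{\mathrm{sparse}}=\mathrm{OPT}_{\mathrm{full}}$, so it is optimal for the full problem.

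I expect no real obstacle. The only delicate point is the inequality convention. The set $E$ uses $\le$, so tie edges with $C_{ij}=c_s(i)+c_t(j)$ are kept, and exclusion only concerns strict domination, exactly where Proposition~\ref{prop:dominated} guarantees $P_{ij}=0$ for all optima. Had $E$ used a strict inequality, I would instead have needed the second part of Proposition~\ref{prop:dominated}, applied to each tie edge in turn. Each application shifts mass $\delta$ into $u_i$ and $v_j$ without changing the objective and without creating new mass on other edges, so the process terminates after finitely many steps.
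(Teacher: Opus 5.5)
Your proposal is correct and follows essentially the same route as the paper. The inclusion of feasible sets gives $\mathrm{OPT}_{\mathrm{full}}\le\mathrm{OPT}_{\mathrm{sparse}}$, and the strict-domination part of Proposition~\ref{prop:dominated} forces every full optimum to vanish off $E$, which yields the reverse inequality and the remaining claims. Your explicit check that both LPs are feasible and bounded below, so that optima are attained, is a small detail the paper leaves implicit.
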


\begin{proof}
The sparse feasible set is a subset of the full feasible set, since it adds the constraints $P_{ij}=0$ for $(i,j)\notin E$. Therefore every sparse optimum is feasible for the full problem, and the full optimum value is at most the sparse optimum value.

Conversely, if $(i,j)\notin E$, then $C_{ij}>c_s(i)+c_t(j)$. By Proposition~\ref{prop:dominated}, every optimal solution of the full problem satisfies $P_{ij}=0$ on such edges. Hence every full optimum is already feasible for the sparse problem. It follows that the sparse optimum value is at most the full optimum value.

The two optimal values are therefore equal. The first claim follows because a sparse optimum is feasible for the full problem with the same objective value, and the second follows because every full optimum is feasible for the sparse problem.
\end{proof}

In the implementation used in this paper, the sparse solver is therefore not a different algorithmic objective. It is the same LP solved on a reduced admissible support, with optimality preserved by Proposition~\ref{prop:sparse-eq}. Numerical details such as tie-break perturbations only affect degeneracy resolution and not the model itself.

The practical size of the admissible set
\[
E=\{(i,j): C_{ij}\le c_s(i)+c_t(j)\}
\]
still depends on the geometry of the transport cost. For instance, if $c_s(i)\le K_s$ and $c_t(j)\le K_t$ uniformly, then
\[
|E|
\le
\#\{(i,j): C_{ij}\le K_s+K_t\},
\]
so sparsity is controlled by how many source-target pairs fall below this global threshold. In geometrically localized regimes, this set can be much smaller than the full Cartesian product, whereas in diffuse regimes it may remain large even though the sparse LP stays exact.

\clearpage
\section{Instantiation principles for unmatched functions}
\label{app:intent-instantiation}

This appendix summarizes the instantiation strategy used throughout the paper. It gives a reproducible design framework: unmatched functions are built from simple, semantically grounded signals in the original domain, and used either to specify participation or to price rejection. In both cases, the goal is to replace arbitrary design in the augmented support by low-dimensional profiles whose meaning is clear before optimization.

\paragraph{Active-support shaping.}
In the first regime, the unmatched functions define which parts of the two marginals are intended to stay available for transport. The relevant design object is a support-level participation profile. A practical recipe is to start from descriptors that identify the intended active subdomains on each side, normalize them into a common range, then map them monotonically to low reject cost where participation should remain easy and high reject cost where participation should be discouraged. These descriptors may come from physical reliability maps, masks, mattes, semantic region indicators, donor-side profiles, or other task-side signals that already live in the original domain. A constant-cost baseline remains an important reference and is structurally limited in this regime because a single uniform unmatched rule cannot encode asymmetric source and target participation patterns. This is the regime instantiated by the geophysical experiment and illustrated visually by the selective-transfer appendix.

\paragraph{Rejection-pressure modulation.}
In the second regime, the ambient support is fixed and the unmatched function instead modulates how expensive it is to reject each point. The design problem is then to construct a task-side relevance score $r_i \in [0,1]$ and map it to unmatched costs, for example
\[
c_s(i)=c_{\min} + (c_{\max}-c_{\min})\,r_i.
\]
This reduces the problem to the construction of a scalar profile. In practice, larger values of $r_i$ should correlate with stronger evidence that point $i$ belongs to the part of the support that should remain matchable, while smaller values should make rejection easier. Proxies for confidence, local support, structural consistency, or selection bias all fall into this family. When $r_i$ is constant, this reduces to the constant-rebate model of Proposition~\ref{prop:constant-rebate-specialization}. The PU and OPDA experiments instantiate this regime with three concrete examples: a selection-bias proxy in PU, posterior confidence in entropy-based OPDA, and neighborhood consistency in prototype-support OPDA. In all current experiments, $c_t$ is kept uniform in this regime because the non-uniform design sits on the side where rejection is the meaningful modeling variable.

These constructions make prior design structured and explicit. The paper provides a design language for turning task-side signals into unmatched profiles, clarifies the role those profiles are meant to play, and leaves fully automatic instantiation across tasks as an open direction.

\clearpage
\section{PU Learning}
\label{app:pu-selection-bias-extra}

\subsection{Setup and unmatched policy}

This appendix complements the controlled PU selection-bias experiment from the main text.

Let $X \in \mathbb{R}^2$ denote the features, $Y \in \{+1,-1\}$ the latent class label, and $S \in \{0,1\}$ the indicator that a positive point is actually observed. The observed positive set is
\[
\{X : Y=+1,\ S=1\},
\]
whereas the unlabeled pool is
\[
\{X : Y=+1,\ S=0\} \cup \{X : Y=-1\}.
\]
The positive-selection mechanism is
\[
\rho(x)=\mathbb{P}(S=1 \mid X=x,\ Y=+1).
\]
Under SCAR, $\rho(x)$ is constant; under selected-at-random sampling, it depends on the covariates. In the synthetic construction used in the paper, the latent supports are held fixed across regimes: positives occupy a horizontal band, while negatives form two lateral modes pulled toward the center. The homogeneous regime is obtained with nearly uniform $\rho(x)$, whereas the heterogeneous regime concentrates $\rho(x)$ near the center and lowers it in the fringes.

\paragraph{Unmatched policies.}
Unless noted otherwise, all results use the same light 5-seed setup as Figure~\ref{fig:pu-toy}, the same transported-mass prior $s=\pi$, and the same constant baseline (\emph{partial-W}) with
\[
c_s(x)=c_t(y)=A,\qquad A=0.15.
\]
The structured policy is tied to the selected-at-random mechanism. In the heterogeneous regime, the observation process selects central positives more often than fringe positives. A low density of observed positives near the fringes is therefore a sampling artifact rather than evidence that fringe points are negative. The source-side unmatched cost is chosen to encode exactly this asymmetry: rejection becomes more expensive in the under-observed fringe regions, while the target-side cost remains uniform because the modeled bias concerns the source-side positive support.
The aligned IC-POT policy keeps $c_t$ fixed and modulates only the source-side unmatched cost through
\[
c_s(x)=c_{\min}+(c_{\max}-c_{\min})\,\widetilde{|x_1|},
\qquad c_{\min}=0.01,\quad c_{\max}=0.10,
\]
where $\widetilde{|x_1|}$ denotes the normalized horizontal magnitude, i.e., the rescaling of $|x_1|$ to $[0,1]$ over the source support. This makes discarding lateral positive mass progressively more costly toward the fringes, which matches the intended selected-at-random mechanism. The deliberately misaligned control used below simply reverses this profile so that rejection becomes cheaper in the fringes and costlier near the center.

\subsection{Additional experiments}

\paragraph{Continuous selection-bias transition.}
We interpolate continuously from SCAR-like sampling to covariate-dependent selection by fixing the central positive-selection probability $\rho_{\mathrm{center}}=\rho(x_1=0)=0.95$ and decreasing the fringe probability $\rho_{\mathrm{fringe}}$, i.e., the selection probability in the far lateral regions of the positive band, from $0.95$ to $0.03$. Figure~\ref{fig:pu-sar-transition} reports the mean F1 gap between IC-POT and \emph{partial-W} on the heterogeneous regime. The gain of the aligned IC-POT policy grows steadily as the selection bias strengthens: it is small under SCAR-like sampling and becomes large once the fringes are strongly under-observed. For comparison, we also plot a deliberately misaligned control that favors the center instead of the fringes; its gap remains negative or much smaller throughout most of the sweep.

\begin{figure}[h]
\centering
\includegraphics[width=0.58\linewidth]{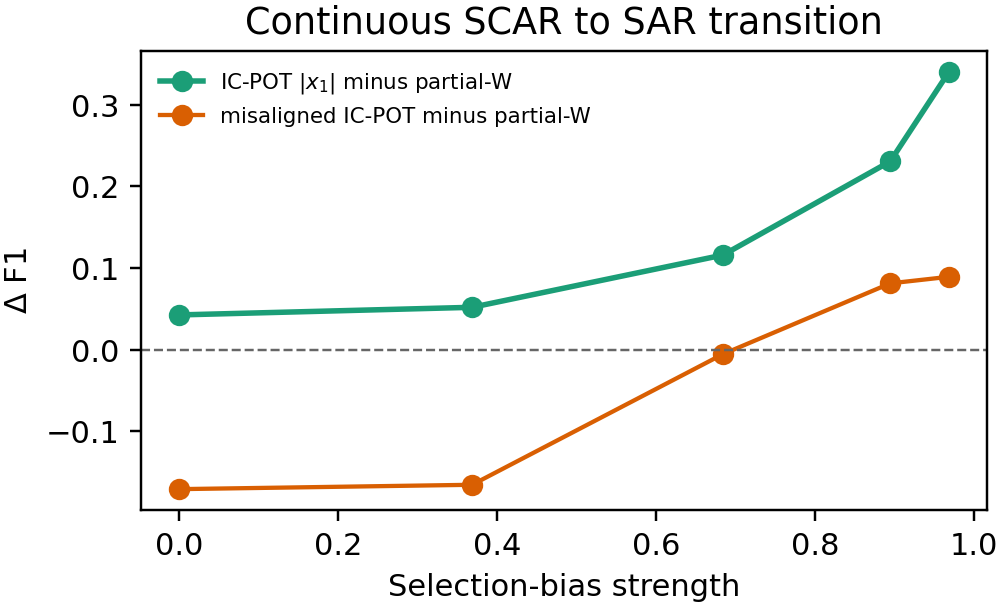}
\caption{Continuous selection-bias sweep. The x-axis measures the strength of the selection bias through $1-\rho_{\mathrm{fringe}}/\rho_{\mathrm{center}}$. We report the mean F1 gap with respect to \emph{partial-W} on the heterogeneous regime.}
\label{fig:pu-sar-transition}
\end{figure}

\paragraph{Sensitivity to negative geometry.}
We next vary the vertical offset of the negative side modes while keeping the positive support and the selection mechanism fixed. Figure~\ref{fig:pu-sar-neg-geometry} shows that the advantage of the aligned IC-POT policy is strongest when the negatives are close enough to compete with under-observed fringe positives, and decreases as they move farther away from the positive band. This is exactly the intended mechanism: once the negatives become less competitive, the constant global rule becomes less problematic.

\begin{figure}[h]
\centering
\includegraphics[width=0.82\linewidth]{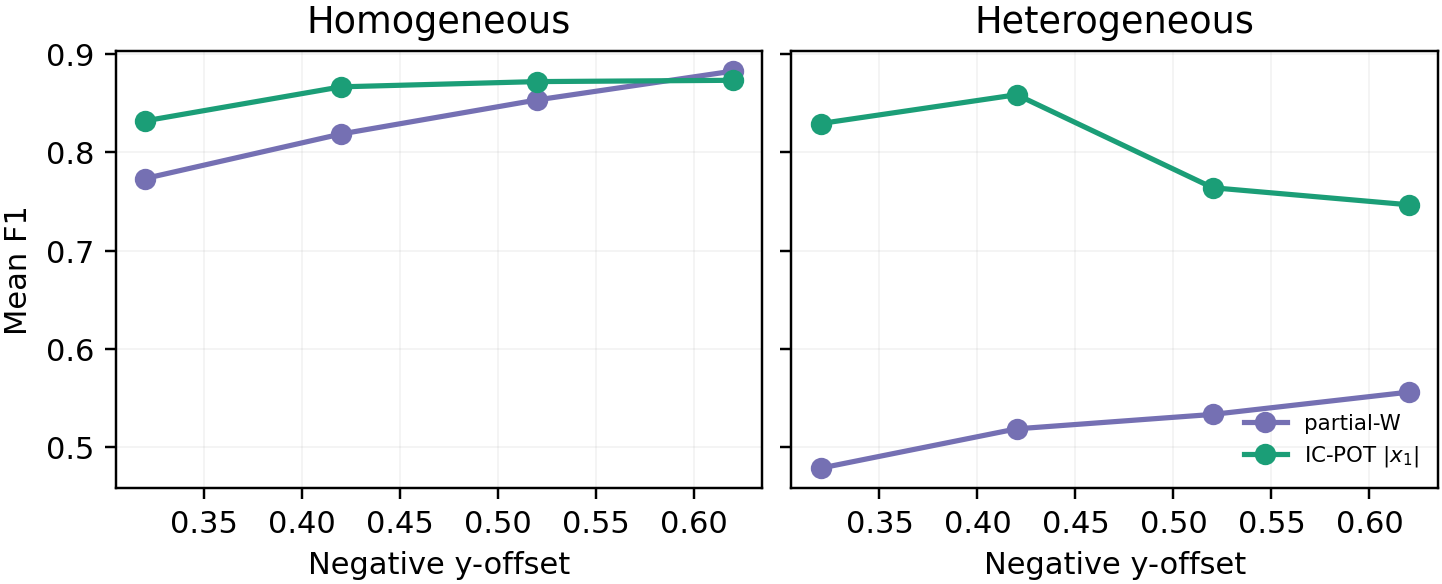}
\caption{Sensitivity to the negative geometry. We vary only the vertical offset of the negative modes and report mean F1 over 5 seeds in the homogeneous and heterogeneous regimes.}
\label{fig:pu-sar-neg-geometry}
\end{figure}

\paragraph{Hyperparameter sensitivity.}
Table~\ref{tab:pu-sar-hparams} reports a small local sweep around the retained hyperparameters. The constant baseline remains confined to a narrow plateau, whereas the IC-POT policy is clearly best near the retained bounded profile $(c_{\min},c_{\max})=(0.01,0.10)$. This is also the most stable setting we found that preserves the homogeneous case while remaining strongly advantageous in the heterogeneous one.

\begin{table}[h]
\centering
\small
\begin{tabular}{lccc}
\toprule
Method / params & Homogeneous F1 & Heterogeneous F1 & Mean \\
\midrule
partial-W, $A=0.13$ & 0.81 & 0.52 & 0.67 \\
partial-W, $A=0.14$ & 0.81 & 0.53 & 0.67 \\
partial-W, $A=0.15$ & 0.82 & 0.52 & 0.67 \\
partial-W, $A=0.16$ & 0.82 & 0.51 & 0.66 \\
\midrule
IC-POT, $c_{\min}=0.005, c_{\max}=0.10$ & 0.87 & 0.86 & 0.86 \\
IC-POT, $c_{\min}=0.005, c_{\max}=0.11$ & 0.86 & 0.86 & 0.86 \\
IC-POT, $c_{\min}=0.010, c_{\max}=0.10$ & 0.87 & 0.86 & 0.86 \\
IC-POT, $c_{\min}=0.010, c_{\max}=0.11$ & 0.86 & 0.86 & 0.86 \\
\bottomrule
\end{tabular}
\caption{Small hyperparameter sweep on the 5-seed PU-SAR setup.}
\label{tab:pu-sar-hparams}
\end{table}

\paragraph{Misaligned unmatched policy.}
Table~\ref{tab:pu-sar-control} shows a negative control in which the unmatched profile is intentionally misaligned with the selection mechanism by favoring the center instead of the fringes. The gain disappears, which supports the main claim of the section: performance depends on alignment with the covariate structure of the selection bias.

\begin{table}[h]
\centering
\small
\begin{tabular}{lcc}
\toprule
Method & Homogeneous F1 & Heterogeneous F1 \\
\midrule
partial-W & 0.82 & 0.52 \\
IC-POT $|x_1|$ & 0.87 & 0.86 \\
misaligned IC-POT & 0.50 & 0.61 \\
\bottomrule
\end{tabular}
\caption{Negative control on the 5-seed PU-SAR setup: replacing the fringe-favoring profile by a center-favoring unmatched policy removes the gain.}
\label{tab:pu-sar-control}
\end{table}

\paragraph{Additional qualitative example.}
Figure~\ref{fig:pu-sar-qual-control} shows an additional heterogeneous example comparing \emph{partial-W}, the aligned IC-POT policy, and the misaligned control on the same seed. The aligned policy preserves the intended fringe participation while avoiding many of the central false positives produced by the uniform baseline; the misaligned control does not.

\begin{figure}[h]
\centering
\includegraphics[width=0.98\linewidth]{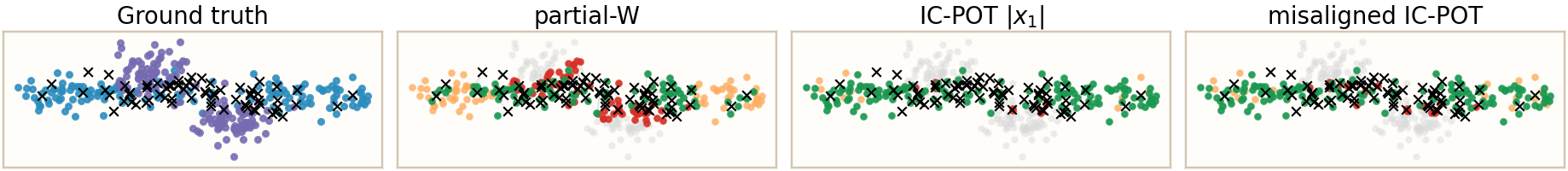}
\caption{Additional qualitative comparison on a heterogeneous PU selection-bias seed. From left to right: ground truth, \emph{partial-W}, aligned IC-POT with $|x_1|$, and a misaligned center-favoring control.}
\label{fig:pu-sar-qual-control}
\end{figure}
\clearpage
\section{Unmatched functions used in the OPDA runs}
\label{app:opda-unmatched}

This appendix records the exact unmatched policies used for the OPDA results reported in the main text. In all cases, we start from the calibrated CLIP distillation posterior $p_i \in \Delta^{K-1}$ of target sample $i$ over the $K$ source prototypes, and from the corresponding target feature vectors $z_i$. The predicted prototype label is $\hat y_i = \arg\max_c p_{ic}$. The transport cost is kept fixed and only the unmatched policy is changed.

The design principle is to protect target samples that carry evidence of belonging to the shared support. In OPDA, target-private samples should be rejected, while target samples compatible with source classes should remain available for matching. The available evidence comes from the fixed representation: posterior confidence measures how clearly a sample is assigned to a source prototype, and local neighborhood support measures whether this assignment is reinforced by nearby target samples. The two IC-POT variants instantiate these two signals separately.

\paragraph{Constant-cost baseline (\emph{partial-W}).}
The baseline uses a uniform unmatched policy on both marginals,
\[
c_s(i)=A,\qquad c_t(j)=A.
\]
In the reported OPDA runs, $A=0.5$.

\paragraph{Entropy-based IC-POT.}
Let
\[
\bar H(p_i)= -\frac{1}{\log K}\sum_{c=1}^K p_{ic}\log p_{ic}
\]
denote the normalized posterior entropy, and let
\[
q_i = 1-\bar H(p_i)
\]
be the corresponding confidence score. The entropy-based source-side unmatched cost is
\[
c_s(i)=A+\lambda_{\mathrm{ent}}(2q_i-1),
\]
with
\[
c_t(j)=A.
\]
This profile makes rejection more expensive for samples with low posterior entropy and cheaper for ambiguous samples. It is the simplest source-side policy one can build from calibrated posteriors, and it tests whether pointwise confidence already contains useful reject structure beyond a global threshold.
This is the \texttt{constant\_plus\_entropy} variant in the implementation. The retained parameters for the paper runs are
\[
A=0.5,\qquad \lambda_{\mathrm{ent}}=0.3.
\]

\paragraph{Prototype-support IC-POT.}
For each target sample $i$, let $\mathcal N_k(i)$ be its $k$ nearest neighbors in normalized feature space. The prototype-support score is defined as
\[
r_i = \frac{1}{k}\sum_{\ell \in \mathcal N_k(i)} p_{\ell,\hat y_i},
\]
that is, the average posterior mass assigned by the local neighborhood to the prototype currently predicted for sample $i$. The resulting unmatched cost is
\[
c_s(i)=A+\lambda_{\mathrm{ps}}\, r_i,
\]
again with
\[
c_t(j)=A.
\]
This profile protects samples whose predicted prototype is supported by the surrounding target geometry. The motivation is that OPDA errors are often structural: an isolated confident prediction is weaker evidence than a locally coherent target region that points to the same source prototype. Prototype-support therefore uses neighborhood consistency as a proxy for shared-class support.
This is the \texttt{constant\_plus\_prototype\_support\_pos} variant in the implementation. The retained parameters for the paper runs are
\[
A=0.5,\qquad \lambda_{\mathrm{ps}}=0.7,\qquad k=15.
\]

\paragraph{Implementation note.}
In the UniOOD wrapper used for the paper runs, these OPDA variants are instantiated by setting
\[
\texttt{ic-pot\_t\_low}=\texttt{ic-pot\_t\_mid}=\texttt{ic-pot\_t\_high}=A,
\]
so that the target-side unmatched cost remains constant even though the codebase also supports more general non-uniform target-side policies. This keeps the OPDA comparison deliberately conservative: the only non-uniform component is the source-side reject policy on target samples.

\clearpage
\section{OPDA Protocol Summary}
\label{app:opda-protocol}

Table~\ref{tab:opda-splits} summarizes the standard OPDA splits used in our experiments. They match the protocol followed by the CLIP distillation benchmark and are kept fixed across all IC-POT variants. Office-31 and Office-Home are averaged over their standard transfer sets. VisDA contains a single transfer (`Syn$\rightarrow$Real`) and is reported accordingly. For DomainNet, we follow the standard multi-seed evaluation protocol and report averages over transfers and seeds.

\begin{table}[h]
\centering
\caption{Standard OPDA splits used throughout the paper. The number of target-private classes is determined by the remaining classes once shared and source-private classes are fixed.}
\label{tab:opda-splits}
\small
\begin{tabular}{lcccc}
\toprule
Dataset & Domains / transfers & Shared & Source-private & Target-private \\
\midrule
Office-31 & 3 domains / 6 transfers & 10 & 10 & 11 \\
Office-Home & 4 domains / 12 transfers & 10 & 5 & 50 \\
VisDA & Syn $\rightarrow$ Real / 1 transfer & 6 & 3 & 3 \\
DomainNet & 3 domains / 6 transfers & 150 & 50 & 145 \\
\bottomrule
\end{tabular}
\end{table}

\clearpage
\section{Structural metrics used in the OPDA analysis}
\label{app:structure-metrics}

This appendix defines the structural metrics used to interpret the OPDA results. They are not optimization objectives. Their role is descriptive: they summarize how diffuse the calibrated posteriors are and how coherent the local target geometry remains.

Consider a target sample $x_i$ with calibrated class logits $\ell_i \in \mathbb{R}^K$, where $K$ is the number of source classes. Let
\[
p_{ik} = \frac{\exp(\ell_{ik})}{\sum_{k'=1}^K \exp(\ell_{ik'})}
\]
be the corresponding posterior over source classes.

\paragraph{Normalized entropy.}
For each sample, we compute the entropy
\[
H_i = - \sum_{k=1}^K p_{ik}\log p_{ik},
\]
and normalize it by the maximum entropy $\log K$:
\[
\widetilde H_i = \frac{H_i}{\log K}.
\]
The reported \emph{mean normalized entropy} is the dataset average of $\widetilde H_i$. Smaller values indicate sharper posteriors.

\paragraph{Top-1 / top-2 margin.}
Let $\ell_i^{(1)}$ and $\ell_i^{(2)}$ denote the largest and second-largest entries of $\ell_i$. The margin is
\[
m_i = \ell_i^{(1)} - \ell_i^{(2)}.
\]
The reported \emph{mean margin} is the dataset average of $m_i$. Larger values indicate better local separability.

\paragraph{k-NN label consistency.}
Let $z_i$ denote the representation used for the structural analysis, and let $\mathcal N_k(i)$ be the set of the $k$ nearest neighbors of $z_i$ in Euclidean distance (excluding $i$ itself). With target ground-truth labels $y_i$, the local label consistency is
\[
\mathrm{Cons}_{\mathrm{label}}
=
\frac{1}{N}\sum_{i=1}^N \frac{1}{k}\sum_{j\in \mathcal N_k(i)} \mathbf 1\{y_j = y_i\}.
\]
In our experiments we use $k=10$. Larger values indicate stronger local class coherence.

\paragraph{k-NN known/unknown consistency.}
Let $b_i \in \{0,1\}$ denote the binary indicator of whether sample $i$ belongs to a source-shared class. Using the same neighborhood graph,
\[
\mathrm{Cons}_{\mathrm{K/U}}
=
\frac{1}{N}\sum_{i=1}^N \frac{1}{k}\sum_{j\in \mathcal N_k(i)} \mathbf 1\{b_j = b_i\}.
\]
This metric summarizes the local coherence of the known-versus-unknown partition.

In the OPDA discussion, high margins and high consistency values indicate a cleaner local target structure, while high normalized entropy indicates a more diffuse regime.

\begin{table}[h]
\centering
\caption{Dataset-level structural summaries used in the OPDA discussion. Values are averaged over the transfer-level analyses reported in the joint CLIP-feature evaluation.}
\label{tab:opda-structure-metrics}
\small
\begin{tabular}{lcccc}
\toprule
Dataset & Mean norm.\ entropy & Mean margin & k-NN label cons. & k-NN K/U cons. \\
\midrule
Office-31 & 0.41 & 2.30 & 0.84 & 0.98 \\
Office-Home & 0.60 & 1.68 & 0.56 & 0.96 \\
VisDA & 0.46 & 2.22 & 0.79 & 0.86 \\
DomainNet & 0.38 & 2.49 & 0.36 & 0.68 \\
\bottomrule
\end{tabular}
\end{table}

\clearpage
\section{Per-transfer OPDA results}
\label{app:opda-per-transfer}

Tables~\ref{tab:opda-office31-per-transfer}, \ref{tab:opda-officehome-per-transfer}, and~\ref{tab:opda-domainnet-per-transfer} report the per-transfer H and H$^3$ scores for the three transport variants retained in the main text. We include these tables to make the transfer-level behavior explicit, especially for the comparison between the constant-cost baseline and its non-uniform refinements.

\begin{table}[h]
\centering
\caption{Per-transfer OPDA results on Office-31. Each entry reports H / H$^3$.}
\label{tab:opda-office31-per-transfer}
\small
\begin{tabular}{lcccc}
\toprule
Transfer & CLIP dis + entropy-only & partial-W & entropy-based & prototype-support \\
\midrule
A$\rightarrow$D & 91.59 / 91.67 & 91.80 / 91.80 & 92.30 / 92.14 & \textbf{93.78} / \textbf{93.12} \\
A$\rightarrow$W & 88.66 / 91.09 & 93.24 / 94.26 & 93.24 / 94.26 & \textbf{94.80} / \textbf{95.32} \\
D$\rightarrow$A & 90.30 / 82.67 & 96.68 / 86.14 & \textbf{96.99} / \textbf{86.30} & 96.77 / 86.19 \\
D$\rightarrow$W & 87.48 / 90.25 & 94.03 / 94.79 & 94.03 / 94.79 & \textbf{94.80} / \textbf{95.32} \\
W$\rightarrow$A & 91.31 / 83.23 & 96.73 / 86.16 & \textbf{96.78} / \textbf{86.19} & 96.77 / 86.19 \\
W$\rightarrow$D & 93.29 / 92.79 & 92.03 / 91.96 & 92.03 / 91.96 & \textbf{93.78} / \textbf{93.12} \\
\bottomrule
\end{tabular}
\end{table}

\begin{table}[h]
\centering
\caption{Per-transfer OPDA results on Office-Home. Each entry reports H / H$^3$.}
\label{tab:opda-officehome-per-transfer}
\small
\begin{tabular}{lcccc}
\toprule
Transfer & CLIP dis + entropy-only & partial-W & entropy-based & prototype-support \\
\midrule
Ar$\rightarrow$Cl & 85.51 / 80.83 & 89.93 / \textbf{83.42} & \textbf{89.94} / \textbf{83.42} & 88.28 / 82.47 \\
Ar$\rightarrow$Pr & 86.47 / 87.87 & 93.46 / 92.56 & \textbf{93.59} / \textbf{92.65} & 89.99 / 90.27 \\
Ar$\rightarrow$Rw & 91.72 / 89.39 & 94.45 / 91.10 & \textbf{94.71} / \textbf{91.26} & 91.45 / 89.22 \\
Cl$\rightarrow$Ar & 90.55 / 82.28 & 93.41 / 83.84 & \textbf{93.48} / \textbf{83.87} & 93.17 / 83.71 \\
Cl$\rightarrow$Pr & 87.77 / 88.76 & 93.87 / 92.83 & \textbf{94.00} / \textbf{92.92} & 90.78 / 90.79 \\
Cl$\rightarrow$Rw & 92.67 / 90.00 & 95.16 / 91.54 & \textbf{95.50} / \textbf{91.76} & 91.95 / 89.54 \\
Pr$\rightarrow$Ar & 84.40 / 78.80 & 92.80 / 83.51 & 92.83 / 83.53 & \textbf{93.97} / \textbf{84.13} \\
Pr$\rightarrow$Cl & 82.86 / 79.24 & 89.63 / 83.24 & 88.76 / 82.74 & \textbf{90.24} / \textbf{83.59} \\
Pr$\rightarrow$Rw & 92.51 / 89.89 & 96.45 / 92.34 & \textbf{96.79} / \textbf{92.54} & 95.11 / 91.51 \\
Rw$\rightarrow$Ar & 86.97 / 80.28 & 93.14 / 83.69 & 93.12 / 83.68 & \textbf{94.05} / \textbf{84.18} \\
Rw$\rightarrow$Cl & 84.25 / 80.08 & 90.08 / 83.51 & \textbf{90.16} / \textbf{83.55} & 89.47 / 83.15 \\
Rw$\rightarrow$Pr & 91.76 / 91.44 & 95.68 / 94.00 & \textbf{95.88} / \textbf{94.13} & 92.90 / 92.20 \\
\bottomrule
\end{tabular}
\end{table}

\begin{table}[h]
\centering
\caption{Per-transfer OPDA results on DomainNet, averaged over three seeds for each transfer. Each entry reports H / H$^3$.}
\label{tab:opda-domainnet-per-transfer}
\small
\begin{tabular}{lcccc}
\toprule
Transfer & CLIP dis + entropy-only & partial-W & entropy-based & prototype-support \\
\midrule
painting$\rightarrow$real & 79.70 / 81.60 & \textbf{84.83} / \textbf{85.12} & 84.81 / 85.10 & 84.13 / 84.64 \\
painting$\rightarrow$sketch & 75.75 / 70.35 & 78.10 / 71.69 & \textbf{78.37} / \textbf{71.84} & 78.06 / 71.66 \\
real$\rightarrow$painting & 75.01 / 73.49 & 74.32 / 73.05 & 75.24 / 73.64 & \textbf{76.16} / \textbf{74.22} \\
real$\rightarrow$sketch & 76.86 / 70.99 & 77.55 / 71.38 & 78.23 / 71.76 & \textbf{78.59} / \textbf{71.97} \\
sketch$\rightarrow$painting & 75.44 / 73.76 & 75.28 / 73.67 & 76.22 / 74.26 & \textbf{76.59} / \textbf{74.50} \\
sketch$\rightarrow$real & 81.55 / 82.89 & 85.10 / 85.30 & \textbf{85.31} / \textbf{85.44} & 84.73 / 85.05 \\
\bottomrule
\end{tabular}
\end{table}
\clearpage
\section{Why the slack model is not a standard Sinkhorn problem}
\label{app:entropy-limitations}

This appendix makes precise the limitation mentioned in the discussion: while the linear slack formulation is exactly equivalent to a balanced OT problem on an augmented support, this equivalence does not survive standard entropic regularization in a form that would preserve the intended selective-rejection semantics.

Let
\[
\phi(t):= t(\log t - 1),\qquad t\ge 0,
\]
with the convention $\phi(0)=0$.
Consider first the natural entropically regularized slack objective
\begin{equation}
\label{eq:entropic-slack}
\begin{aligned}
\min_{P,u,v}\quad &
\langle C,P\rangle + \langle c_s,u\rangle + \langle c_t,v\rangle \\
&\qquad + \varepsilon \Bigl(\sum_{i,j}\phi(P_{ij}) + \sum_i \phi(u_i) + \sum_j \phi(v_j)\Bigr) \\
\text{s.t.}\quad &
P\mathbf 1 + u = \mu,\qquad P^\top \mathbf 1 + v = \nu,\qquad P,u,v\ge 0.
\end{aligned}
\end{equation}
This is the direct entropic analogue of Eq.~\eqref{eq:apot-slack}: the transport variables and the residual variables are regularized, but no new modeling term is added.

\begin{proposition}
\label{prop:entropic-not-equivalent}
Let $\bar\mu,\bar\nu,\bar C$ be the augmented marginals and cost matrix from Appendix~\ref{app:equivalence}, and let
\[
\bar P=
\begin{pmatrix}
P & u\\
v^\top & e
\end{pmatrix},
\qquad
e:=\sum_{i,j} P_{ij}.
\]
Then
\[
\langle \bar C,\bar P\rangle + \varepsilon \sum_{a,b}\phi(\bar P_{ab})
=
\Bigl[
\langle C,P\rangle + \langle c_s,u\rangle + \langle c_t,v\rangle
+ \varepsilon \Bigl(\sum_{i,j}\phi(P_{ij}) + \sum_i \phi(u_i) + \sum_j \phi(v_j)\Bigr)
\Bigr]
+ \varepsilon\,\phi(e).
\]
Consequently, the standard entropically regularized balanced OT problem on the augmented support is \emph{not} equivalent to Eq.~\eqref{eq:entropic-slack} unless $\varepsilon=0$.
\end{proposition}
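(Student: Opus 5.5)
The identity is a block expansion of both terms. The linear part is already handled by the dummy-point equivalence of Appendix~\ref{app:equivalence}: writing $\bar P$ blockwise against $\bar C=\begin{pmatrix}C&c_s\\c_t^\top&0\end{pmatrix}$ gives
\[
\langle\bar C,\bar P\rangle=\langle C,P\rangle+\langle c_s,u\rangle+\langle c_t,v\rangle+0\cdot e .
\]
The entropy term is separable over entries, so it splits the same way:
\[
\sum_{a,b}\phi(\bar P_{ab})=\sum_{i,j}\phi(P_{ij})+\sum_i\phi(u_i)+\sum_j\phi(v_j)+\phi(e).
\]
Multiplying the second display by $\varepsilon$ and adding the first gives the stated identity, with the extra term $\varepsilon\,\phi(e)$ coming from the corner entry $\bar P_{n+1,m+1}=e$.

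For the consequence, I will first note that the map $(P,u,v)\mapsto\bar P$ is a bijection between feasible slack triples and $\Pi(\bar\mu,\bar\nu)$. This is shown in Appendix~\ref{app:equivalence}. The one point to check is that the corner entry of any $\bar P\in\Pi(\bar\mu,\bar\nu)$ is forced to equal $\sum_{i,j}P_{ij}$. It is: the dummy column sum gives $\sum_i u_i+e=\|\mu\|_1$, and the original rows give $\sum_i u_i=\|\mu\|_1-\sum_{i,j}P_{ij}$.

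Under this bijection, the entropic augmented objective equals the objective of Eq.~\eqref{eq:entropic-slack} plus $\varepsilon\phi(e)$. The penalty $\varepsilon\phi(e)$ depends on the total transported mass, so for $\varepsilon>0$ it is not constant on the feasible set.

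To show the two problems genuinely differ, I will exhibit a small instance where their minimizers differ. The simplest choice is $n=m=1$, $\mu=\nu=1$, where the feasible set is parametrized by $e=P_{11}\in[0,1]$ with $u=v=1-e$. The two objectives are then
\[
F(e)=C e+(c_s+c_t)(1-e)+\varepsilon\bigl(\phi(e)+2\phi(1-e)\bigr),\qquad G(e)=F(e)+\varepsilon\phi(e).
\]
Their first-order conditions are
\[
C-c_s-c_t+\varepsilon\log e-2\varepsilon\log(1-e)=0,\qquad C-c_s-c_t+2\varepsilon\log e-2\varepsilon\log(1-e)=0.
\]
An interior solution $e^\star\in(0,1)$ cannot satisfy both, because subtracting them would require $\varepsilon\log e^\star=0$, i.e.\ $e^\star=1$. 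So $F$ and $G$ have different minimizers whenever $\varepsilon>0$. Strict convexity together with the entropy's infinite slope at the boundary rules out boundary minimizers, so the minimizers are indeed interior.

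The main obstacle is making precise what ``not equivalent'' should mean. Since $\phi(e)$ is nonconstant, an additive-constant equivalence is impossible, but one could object that an equivalence might still hold with some reparametrization. I will settle this by using the explicit instance above, which shows the optimal plans themselves differ, not merely the objective values.
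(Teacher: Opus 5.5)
Your proof of the identity is the same as the paper's. The linear part is the block expansion from Appendix~\ref{app:equivalence}, and separability of $\phi$ splits the entropy, with the corner entry contributing $\varepsilon\phi(e)$.

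The difference is in the ``consequently'' part. The paper only observes that $\varepsilon\phi(e)$ is a non-constant function of the transported mass with no counterpart in Eq.~\eqref{eq:entropic-slack}. That remark alone does not prove the optimizers differ, since a non-constant additive term need not move an argmin.

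Your $n=m=1$ instance supplies the missing argument. The two first-order conditions differ by $\varepsilon\log e$, which vanishes only at $e=1$. Strict convexity and the logarithmic blow-up of $\phi'$ at $0$ (applied to $e$ and to $1-e$) force interior minimizers. So the optimal plans differ for every $\varepsilon>0$ and every choice of $C,c_s,c_t$.

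The same computation also gives the sign of the bias. Since $G'-F'=\varepsilon\log e<0$ on $(0,1)$, the augmented entropic problem transports strictly more mass, which makes the paper's informal ``nonlinear bias on the transported mass'' precise.

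Your check that the corner of any $\bar P\in\Pi(\bar\mu,\bar\nu)$ is forced to equal $\sum_{i,j}P_{ij}$ is a point the converse direction of Appendix~\ref{app:equivalence} leaves implicit. It is what lets the identity hold on the whole augmented polytope, not only on matrices built from slack triples.

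The paper's argument is shorter, but yours is the one that actually establishes non-equivalence.
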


\begin{proof}
The block decomposition of $\bar P$ gives
\[
\sum_{a,b}\phi(\bar P_{ab})
=
\sum_{i,j}\phi(P_{ij}) + \sum_i \phi(u_i) + \sum_j \phi(v_j) + \phi(e).
\]
Combining this identity with the linear objective preservation proved in Appendix~\ref{app:equivalence},
\[
\langle \bar C,\bar P\rangle
=
\langle C,P\rangle + \langle c_s,u\rangle + \langle c_t,v\rangle,
\]
yields the claimed formula immediately. The extra term $\varepsilon \phi(e)$ depends on the total transported mass $e=\sum_{i,j}P_{ij}$ and has no counterpart in Eq.~\eqref{eq:entropic-slack}. Therefore the two optimization problems coincide only in the unregularized case $\varepsilon=0$.
\end{proof}

Proposition~\ref{prop:entropic-not-equivalent} is the first obstruction to a direct Sinkhorn reduction: standard balanced entropic OT on the augmented support solves a \emph{different} objective, with an additional nonlinear bias on the transported mass.

The second issue is more structural. Even if one accepts the augmented entropic problem as a surrogate, the dummy row and dummy column carry macroscopic masses. At the empirical level, this creates a severe scale mismatch between dummy points and real points.

\begin{proposition}
\label{prop:entropic-dummy-scale}
In the augmented formulation from Appendix~\ref{app:equivalence}, the dummy source and dummy target carry masses
\[
\bar\mu_{n+1}=\|\nu\|_1,
\qquad
\bar\nu_{m+1}=\|\mu\|_1.
\]
In particular, if $\|\mu\|_1=\|\nu\|_1=1$ and the empirical measures are uniform on $n$ source points and $m$ target points, then
\[
\mu_i=\frac{1}{n},
\qquad
\nu_j=\frac{1}{m},
\qquad
\bar\mu_{n+1}=\bar\nu_{m+1}=1.
\]
Hence the dummy source is $m$ times heavier than each real target point, and the dummy target is $n$ times heavier than each real source point.
\end{proposition}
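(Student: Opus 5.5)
The argument is a direct reading of the augmented marginals fixed in Proposition~\ref{prop:augmented-support} (equivalently Appendix~\ref{app:equivalence}), followed by a substitution in the uniform case. No optimization argument is needed: the statement concerns only the data $\bar\mu,\bar\nu$ of the balanced problem, not its solutions.

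\emph{Step 1: dummy masses.} By construction, $\bar\mu=(\mu,\|\nu\|_1)$ and $\bar\nu=(\nu,\|\mu\|_1)$. Reading off the last coordinates gives $\bar\mu_{n+1}=\|\nu\|_1$ and $\bar\nu_{m+1}=\|\mu\|_1$. It is worth recalling why these are the right values: in the coupling $\bar P$ built in the proof of Appendix~\ref{app:equivalence}, the dummy row carries $v$ together with the corner entry $e$, and $\sum_j v_j+e=\|\nu\|_1$ for every feasible plan. The dummy column satisfies the symmetric identity. The dummy masses are therefore forced by feasibility and do not depend on the amount of mass actually rejected.

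\emph{Step 2: uniform specialization.} Assume $\|\mu\|_1=\|\nu\|_1=1$ with uniform weights, so that $\mu_i=1/n$ and $\nu_j=1/m$. Substituting into Step 1 gives $\bar\mu_{n+1}=\bar\nu_{m+1}=1$.

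\emph{Step 3: ratios.} The ratios follow immediately: $\bar\mu_{n+1}/\nu_j=1/(1/m)=m$ and $\bar\nu_{m+1}/\mu_i=n$. This matches the stated pairing, dummy source against real target points and dummy target against real source points. These are the comparisons that matter, because the dummy source row is transported onto target points through the $v$ block, and the real source points feed the dummy target column through the $u$ block.

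There is no substantive obstacle here. The only point needing care is to compare each dummy with the points on the \emph{opposite} side, as the statement does, rather than with the points in its own augmented marginal. It is also worth remarking that the identities hold for every feasible plan, so the scale mismatch is a structural property of the augmented marginals, which is what the subsequent discussion of Sinkhorn scaling needs.
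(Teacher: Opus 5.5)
Your proposal is correct and follows essentially the same route as the paper: read the dummy masses directly off the definitions $\bar\mu=(\mu,\|\nu\|_1)$ and $\bar\nu=(\nu,\|\mu\|_1)$, then substitute the uniform weights $1/n$ and $1/m$ to obtain the ratios $m$ and $n$. Your extra remark, that the $v$, $u$ and corner blocks make these masses consistent for every feasible $\bar P$, is accurate but not needed for the statement.
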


\begin{proof}
The formulas for $\bar\mu_{n+1}$ and $\bar\nu_{m+1}$ are exactly the definitions of the augmented marginals in Appendix~\ref{app:equivalence}. Under uniform empirical weights with unit total mass, each real point carries mass $1/n$ or $1/m$, while each dummy carries total mass $1$, which gives the claimed ratios immediately.
\end{proof}

This is the practical reason why the standard entropic dummy-point reduction behaves poorly. The issue is not only that entropic smoothing eliminates the exact admissibility rule of Proposition~\ref{prop:dominated}. It is that the two dummy channels act as large mass reservoirs, whereas each real sample carries only a tiny fraction of that mass.

For completeness, note that with finite costs the entropically regularized augmented problem still has a strictly positive kernel, so every real point interacts with the dummy row and the dummy column. Indeed, the unique minimizer $\bar P^{(\varepsilon)}$ satisfies
\[
\bar P^{(\varepsilon)}_{ab}
=
\exp\!\left(\frac{\alpha_a+\beta_b-\bar C_{ab}}{\varepsilon}\right)
>0
\qquad\text{for all } a,b
\]
for suitable dual scaling factors $\alpha_a,\beta_b$. Thus the large dummy marginals are not localized on a few rejected points: under entropic smoothing they spread diffusely across the whole support.

The next result shows that this is not just a small numerical leakage. In the high-entropy regime, the dummy channels absorb a macroscopic amount of mass independently of the geometry.

\begin{proposition}
\label{prop:entropic-high-eps}
Let $\bar P^{(\varepsilon)}$ be the minimizer of the entropically regularized augmented problem above, and let
\[
M := \|\bar\mu\|_1 = \|\bar\nu\|_1 = \|\mu\|_1 + \|\nu\|_1.
\]
Then, as $\varepsilon \to +\infty$,
\[
\bar P^{(\varepsilon)}
\longrightarrow
\bar P^{(\infty)}
:=
\frac{1}{M}\,\bar\mu\,\bar\nu^\top .
\]
In particular,
\[
\bar P^{(\infty)}_{i,m+1}
=
\frac{\mu_i\,\|\mu\|_1}{\|\mu\|_1+\|\nu\|_1},
\qquad
\bar P^{(\infty)}_{n+1,j}
=
\frac{\|\nu\|_1\,\nu_j}{\|\mu\|_1+\|\nu\|_1},
\]
and
\[
\bar P^{(\infty)}_{n+1,m+1}
=
\frac{\|\mu\|_1\,\|\nu\|_1}{\|\mu\|_1+\|\nu\|_1}.
\]
When $\|\mu\|_1=\|\nu\|_1=m$, this reduces to
\[
\bar P^{(\infty)}_{i,m+1}=\frac{\mu_i}{2},
\qquad
\bar P^{(\infty)}_{n+1,j}=\frac{\nu_j}{2},
\qquad
\bar P^{(\infty)}_{n+1,m+1}=\frac{m}{2}.
\]
\end{proposition}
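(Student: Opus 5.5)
The idea is to rescale the objective by $1/\varepsilon$, so that for large $\varepsilon$ the problem is a small perturbation of pure entropy minimization over the fixed polytope $\Pi(\bar\mu,\bar\nu)$. Then I would identify the unique entropy minimizer explicitly and pass to the limit using compactness and uniqueness.

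\emph{Setup.} Write $\Phi(\bar P):=\sum_{a,b}\phi(\bar P_{ab})$. On $\Pi(\bar\mu,\bar\nu)$ the minimizer $\bar P^{(\varepsilon)}$ of $\langle\bar C,\bar P\rangle+\varepsilon\Phi(\bar P)$ also minimizes
\[
F_\varepsilon(\bar P):=\tfrac{1}{\varepsilon}\langle \bar C,\bar P\rangle+\Phi(\bar P).
\]
The set $\Pi(\bar\mu,\bar\nu)$ is nonempty (it contains $Q:=\bar\mu\bar\nu^\top/M$) and compact. Moreover $\Phi$ is continuous and strictly convex there, so $\bar P^{(\varepsilon)}$ exists and is unique. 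Since every entry of $\bar P$ is at most $M$ and the total mass is $M$, we have the uniform bound
\[
\sup_{\bar P\in\Pi}|F_\varepsilon(\bar P)-\Phi(\bar P)|\le \|\bar C\|_\infty M/\varepsilon .
\]

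\emph{Key step: identifying the minimizer of $\Phi$ on $\Pi$.} I expect this to be the only nontrivial point, and I would handle it with a KL identity rather than Lagrange multipliers. For $\bar P\in\Pi$, the term $\sum_{a,b}\bar P_{ab}$ equals $M$. Because $\log Q_{ab}=\log\bar\mu_a+\log\bar\nu_b-\log M$ is separable, $\sum_{a,b}\bar P_{ab}\log Q_{ab}$ depends only on the marginals and is therefore the same for all $\bar P\in\Pi$. It follows that
\[
\Phi(\bar P)-\Phi(Q)=\mathrm{KL}(\bar P\,\|\,Q)\ge 0,
\]
with equality iff $\bar P=Q$. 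Zero masses can be handled by restricting to the support of $\bar\mu,\bar\nu$; the dummy masses are positive whenever $\mu,\nu\neq0$. So $Q$ is the unique minimizer of $\Phi$ on $\Pi$.

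\emph{Limit.} By optimality of $\bar P^{(\varepsilon)}$ and the uniform bound,
\[
\mathrm{KL}(\bar P^{(\varepsilon)}\|Q)=\Phi(\bar P^{(\varepsilon)})-\Phi(Q)\le 2\|\bar C\|_\infty M/\varepsilon\to0 .
\]
Pinsker's inequality (applied after normalizing by $M$) then gives $\bar P^{(\varepsilon)}\to Q$, with an explicit $O(\varepsilon^{-1/2})$ rate in $\ell^1$. Alternatively, any limit point minimizes $\Phi$ and hence equals $Q$ by uniqueness.

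\emph{Explicit entries.} The stated entries follow by substituting $\bar\mu_{n+1}=\|\nu\|_1$ and $\bar\nu_{m+1}=\|\mu\|_1$ into $Q_{ab}=\bar\mu_a\bar\nu_b/M$. The special case with equal masses is direct arithmetic.
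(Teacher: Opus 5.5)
Your proposal is correct and follows the paper's argument. Both rescale the objective by $1/\varepsilon$ and identify $\frac{1}{M}\bar\mu\bar\nu^\top$ as the unique entropy minimizer on $\Pi(\bar\mu,\bar\nu)$ through the same KL identity. The paper closes with the accumulation-point argument you list as your alternative, while your Pinsker step additionally gives an explicit $O(\varepsilon^{-1/2})$ rate in $\ell^1$.
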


\begin{proof}
Divide the augmented entropic objective by $\varepsilon$. One obtains
\[
\frac{1}{\varepsilon}\langle \bar C,\bar P\rangle + \sum_{a,b}\phi(\bar P_{ab}),
\]
minimized over the compact polytope $\Pi(\bar\mu,\bar\nu)$. As $\varepsilon\to+\infty$, the cost term vanishes uniformly, so every accumulation point of $\bar P^{(\varepsilon)}$ minimizes $\sum_{a,b}\phi(\bar P_{ab})$, or equivalently maximizes the Shannon entropy, over $\Pi(\bar\mu,\bar\nu)$.

The unique maximum-entropy coupling with fixed marginals is the independent coupling
\[
\bar P^{(\infty)}=\frac{1}{M}\bar\mu\,\bar\nu^\top,
\]
which can be verified directly either from strict concavity of entropy or from the identity
\[
\mathrm{KL}\!\left(\bar P \,\middle\|\, \frac{1}{M}\bar\mu\,\bar\nu^\top\right)
=
\sum_{a,b}\bar P_{ab}\log \bar P_{ab}
- \sum_a \bar\mu_a \log \bar\mu_a
- \sum_b \bar\nu_b \log \bar\nu_b
+ M \log M,
\]
whose last three terms are constant over $\Pi(\bar\mu,\bar\nu)$. Therefore $\bar P^{(\varepsilon)} \to \bar P^{(\infty)}$. The formulas for the dummy row and column are then obtained by reading off the corresponding entries of $\bar P^{(\infty)}$.
\end{proof}

Propositions~\ref{prop:entropic-not-equivalent}--\ref{prop:entropic-high-eps} clarify the sense in which IC-POT is not directly a Sinkhorn model. The issue is not merely algorithmic. Standard balanced entropic regularization on the augmented support both modifies the objective and introduces two dummy marginals whose masses are macroscopically larger than those of individual real points. Because entropic couplings remain diffuse, these dummy reservoirs interact with the whole support, and in the high-entropy regime they route a geometry-independent fraction of the mass through the dummy row and column. This is precisely the opposite of the intended selective-rejection semantics, where unmatched behavior should remain pointwise controlled and structurally sparse.

The same distinction separates IC-POT from unbalanced OT with spatially varying KL penalties \citep{liero2018optimal,chizat2018scaling}. A weighted KL-UOT model would take the form
\[
\min_{P\ge 0}\;
\langle C,P\rangle
+
\sum_i \tau_i\,\mathrm{KL}\!\bigl((P\mathbf 1)_i \,\big|\, \mu_i\bigr)
+
\sum_j \sigma_j\,\mathrm{KL}\!\bigl((P^\top\mathbf 1)_j \,\big|\, \nu_j\bigr),
\]
with pointwise penalty weights $\tau_i,\sigma_j \ge 0$. Writing $u_i=\mu_i-(P\mathbf 1)_i$ and $v_j=\nu_j-(P^\top\mathbf 1)_j$, the source-side penalty becomes
\[
\tau_i\,\mathrm{KL}(\mu_i-u_i \mid \mu_i)
=
\tau_i\Bigl[(\mu_i-u_i)\log\!\Bigl(1-\frac{u_i}{\mu_i}\Bigr)+u_i\Bigr],
\]
and similarly on the target side. This dependence is strictly convex in the rejected mass and explicitly anchored to the local reference masses $\mu_i,\nu_j$. By contrast, IC-POT uses the linear pricing terms $c_s(i)u_i + c_t(j)v_j$, together with the pointwise dual caps $f_i\le c_s(i)$ and $g_j\le c_t(j)$ from Proposition~\ref{prop:cs}. The two models therefore encode different objects: weighted KL-UOT implements a soft, mass-relative marginal relaxation, whereas IC-POT implements explicit pointwise rejection prices with hard acceptance thresholds. Spatially varying KL weights can make UOT more heterogeneous, but they do not recover the slack model studied here.
\end{document}